\documentclass[11pt]{article}
\usepackage[margin=1in]{geometry}
%
%


\usepackage[round]{natbib}

\usepackage{palatino}

\bibliographystyle{apalike}

\usepackage{enumitem}
\usepackage[utf8]{inputenc}
\usepackage{amsmath,amssymb,amsthm}
\usepackage{tabu}
\usepackage{color}
\usepackage{calc}
\usepackage{tikz}
\usetikzlibrary{positioning,arrows,decorations.pathreplacing,shapes}
\usetikzlibrary{calc}
\usepackage{multirow}
\usepackage{boxedminipage}
\usepackage{xifthen}
\usepackage{tabularx}
\usepackage{hyperref}
\usepackage{algorithmic}
\usepackage[vlined,ruled]{algorithm2e}
\usepackage{thm-restate}

\newcommand{\R}{\mathbb{R}}

\newcommand{\x}{\mathbf{x}}
\renewcommand{\v}{\mathbf{v}}
\newcommand{\y}{\mathbf{y}}

\DeclareMathOperator{\E}{\mathbb{E}}
\DeclareMathOperator{\Prob}{\Pr}

\DeclareMathOperator{\regret}{\mathcal{R}}

\DeclareMathOperator{\diam}{diam}

\newtheorem{claim}{Claim}[section]

\newtheorem{proposition}{Proposition}

\newtheorem{theorem}{Theorem}

\newtheorem{definition}{Definition}
\newtheorem{lemma}{Lemma}

\usepackage{xcolor}
\definecolor{DarkGreen}{rgb}{0.1,0.5,0.1}

\newcommand{\footremember}[2]{%
	\footnote{#2}
	\newcounter{#1}
	\setcounter{#1}{\value{footnote}}%
}

\begin{document}

%

%

\title{Differentially Private Online Submodular Maximization}


\author{
	Sebastian Perez-Salazar\footremember{1}{Georgia Institute of Technology, \texttt{sperez@gatech.edu}}
	\and
	Rachel Cummings\footremember{2}{Georgia Institute of Technology, \texttt{rachelc@gatech.edu}. Supported in part by a Mozilla Research Grant, NSF grants CNS-1850187 and CNS-1942772 (CAREER), and a JPMorgan Chase Faculty Research Award.}
}

\maketitle

\begin{abstract}
  In this work we consider the problem of online submodular maximization under a cardinality constraint with differential privacy (DP). A stream of $T$ submodular functions over a common finite ground set $U$ arrives online, and at each time-step the decision maker must choose at most $k$ elements of $U$ before observing the function. The decision maker obtains a payoff equal to the function evaluated on the chosen set, and aims to learn a sequence of sets that achieves low expected regret. 
  
  In the full-information setting, we develop an $(\varepsilon,\delta)$-DP algorithm with expected $(1-1/e)$-regret bound of $\mathcal{O}\left( \frac{k^2\log |U|\sqrt{T \log k/\delta}}{\varepsilon}  \right)$. This algorithm contains $k$ ordered experts that learn the best marginal increments for each item over the whole time horizon while maintaining privacy of the functions. 
  In the bandit setting, we provide an $(\varepsilon,\delta+ O(e^{-T^{1/3}}))$-DP algorithm with expected $(1-1/e)$-regret bound of $\mathcal{O}\left( \frac{\sqrt{\log k/\delta}}{\varepsilon} (k (|U| \log |U|)^{1/3})^2 T^{2/3}  \right)$. 
  
  Our algorithms contains $k$ ordered experts that learn the best marginal item to select given the items chosen her predecessors, while maintaining privacy of the functions. One challenge for privacy in this setting is that the payoff and feedback of expert $i$ depends on the actions taken by her $i-1$ predecessors. This particular type of information leakage is not covered by post-processing, and new analysis is required.  Our techniques for maintaining privacy with feedforward may be of independent interest.
  
\end{abstract}


\section{Introduction}

Ensuring users' privacy has become a critical task in online learning algorithms. 
As an illustrative example, sponsored search engines aim to maximize the probability that displayed ads or products are clicked by incoming customers, but prospective customers do not want their privacy infringed after clicking on a product. Users visiting online retailer web-pages such as Amazon, Walmart or Target leave behind an abundance of sensitive personal information that can be use to predict their behaviors or preferences, potentially leading to catastrophic results~\citep{zhang2014privacy}\footnote{See also \url{https://www.nytimes.com/2012/02/19/magazine/shopping-habits.html}}.
In this work, we introduce the first algorithms for privacy-preserving online monotone submodular maximization under a cardinality constraint. 

A \emph{submodular} set function $f:2^U\to \mathbb{R}$ exhibits \emph{diminishing returns}, meaning that adding an element $x$ to a larger set $B$ creates less additional value than adding $x$ any subset of $B$.  (See Definition \ref{def.submod} in Section \ref{sec:prelim} for a formal definition.) Submodular functions have found widespread application in economics, computer science and operations research (see, e.g., \citet{bach2013learning} and \citet{krause2014submodular}), and have recently gained attention as a modeling tool for data summarization and ad display~\citep{ahmed2012fair, streetergolovinkrause2009online, badanidiyuru2014streaming}.  We additionally consider \emph{monotone} submodular functions, where adding elements to a set can only increase the value of $f$. Since unconstrained monotone submodular maximization is trivial---$f(S)$ can be maximized by choosing the entire universe $S=U$---we consider \emph{cardinality constrained} maximization, where the decision-maker solves: $\max_{S\subseteq U} f(S) \text{ s.t. } |S| \leq k$.



In the online learning setting, at each time-step $t$ a learner must choose a set $S_t\subseteq U$ of size at most $k$ and receives payoff $f_t(S_t)$ for a monotone submodular function $f_t$.  Importantly, the learner does not know $f_t$ before she chooses $S_t$, but this set can be chosen based on previous functions $f_1,\ldots,f_{t-1}$. Two types of informational feedback are commonly studied in the online learning literature. In the \emph{full-information} setting, the learner gets full oracle access to the function $f_t$ after choosing $S_t$, and thus is able to incorporate the entirety of previous functions into her future decisions. In the \emph{bandit} setting, the learner only observes her own payoff $f_t(S_t)$ as feedback.


Performance of an online learner is typically measured by the \emph{regret}, which is the difference between the best fixed decision in hindsight and the cumulative payoff obtained by the learner \citep{zinkevich2003online,hazan2016introduction,shalev2012online}. More precisely, the regret of a learner after $T$ rounds is: $\max_{|S|\leq k} \sum_{t=1}^T f_t(S) - \sum_{t=1}^T f_t(S_t)$. The aim often is to design algorithms with \emph{sublinear} regret, i.e., $o(T)$, so that the average payoff over time of the algorithm is comparable with the best average fixed profit in hindsight. Offline monotone submodular maximization under a cardinality constraint is NP-hard to approximate with a factor better than $(1-1/e)$~\citep{feige1998threshold, mirrokni2008tight}, so we instead measure the quality of our algorithms using the more restrictive notion of $(1-1/e)$-regret~\citep{streeter2009online,streetergolovinkrause2009online}:
\begin{equation}\label{eq.regret}
\regret_T = \left(1-\frac{1}{e}\right) \max_{|S|\leq k} \sum_{t=1}^T f_t(S) - \sum_{t=1}^T f_t(S_t).
\end{equation}


The privacy notion we consider in this work is \emph{differential privacy}~\citep{dwork2006calibrating}, which enables accurate estimation of population-level statistics while ensuring little can be learned about the individuals in the database. Informally, a randomized algorithm is said to be differentially private if changing a single entry in the input database results in only a small distributional change in the outputs. (See Definition \ref{def.dp} in Section~\ref{sec:prelim} for a formal definition.) This means that an adversary cannot information-theoretically infer whether or not a single individual participated in the database. Differentially private algorithms have been deployed by major organizations including Apple, Google, Microsoft, Uber, and the U.S. Census Bureau, and are seen as the gold standard in privacy-preserving data analysis. In this work, the input database to our learning algorithm consists of a stream of functions $F=\{f_1,\ldots,f_T \}$, and each individual's data corresponds to a function $f_t$.  Our privacy guarantees ensure that the stream of chosen sets $S_1, \ldots,S_T$ are differentially private with respect to this database of functions,


In both the full-information and bandit settings, we present differentially private online learning algorithms that achieve sublinear expected $(1-1/e)$-regret.

\paragraph{Motivating Example.} While there are countless examples of practical online submodular maximization problems using sensitive data, we offer this motivating example for concreteness. Consider an online product display model where a website has $k$ display slots and wants to maximize the probability of any displayed product being clicked. Each customer $t$ has a (privately known) probability $p_a^t$ of clicking a display for product $a\in U$, independently of the other products displayed.  Let $f_t(S)$ denote the probability that customer $t$ clicks on any product in a display set $S$.  We can write this function in closed form as $f_t(S)=1- \prod_{a\in S}(1-p_a^t)$. Note that this function is submodular because adding products to the set $S$ exhibits diminishing returns in total click probability. Each customer's click-probabilities $\{p^t_a\}_{a\in U}$ contain sensitive information about his preferences or habits, and require formal privacy protections.


\subsection{Our Results}
Our main results are differentially private algorithms for online submodular maximization under a cardinality constraint.  We provide algorithms that achieve sublinear expected $(1-1/e)$-regret in both the full-information and bandit settings.

Our algorithms are based on the approach of~\citet{streeter2009online}, who designed (non-private) online algorithms with low expected $(1-1/e)$-regret for submodular maximization. We adapt and extend their techniques to additionally satisfy differential privacy. Following the spirit of \citet{streeter2009online}, our algorithms have $k$ ordered online learning algorithms, or \emph{experts}, that together pick $k$ items at every time-step and learn from their decisions over time. Roughly speaking, expert $i$ learns how to choose an item that compliments the decisions of the previous $i-1$ experts. The expected $(1-1/e)$-regret can be bounded by the regret of these $k$ experts, so to show a low $(1-1/e)$-regret algorithm that preserves privacy, we simply need to find no-regret experts that together preserve privacy. Ideally, we would like each expert to be differentially private so that simple composition and post-processing arguments would yield overall privacy guarantees. Unfortunately this is not possible for $k>1$ because the choices of all previous experts alter the distribution of payoffs for expert $i$. 

Specifically, the $i$-th expert non-privately queries the function (i.e., accesses the database) at $|U|$ points that depend on the action of the previous experts.  A naive solution is to allow each expert to query the function at any of its $2^{|U|}$ values, and then privacy would be satisfied by post-processing on the differentially private outputs of previous experts. However, this larger domain size requires large quantities of noise that would harm the experts' no-regret guarantees. Effectively, this decouples the advice of the $k$ experts, so that experts are not learning from each other. This naturally helps privacy but harms learning. Instead, we restrict each expert to a domain of size $|U|$ that is defined by the actions of previous experts. This ensures no-regret learning, but post-processing no longer ensures privacy. We overcome this challenge by showing that \emph{together} the experts are differentially private and sufficiently low quantities of noise are needed.




Theorem \ref{thm:main} below is an informal version of our main results in the full-information setting (Theorems \ref{thm:privacy_full} and \ref{thm:full_info_regret} in Section \ref{sec:full_information}).

\begin{theorem}[Informal]\label{thm:main}
In the full-information setting, Algorithm~\ref{alg:EXP-ALG} for online monotone $k$-cardinality-constrained submodular maximization is $(\varepsilon,\delta)$-differentially private and guarantees
\begin{equation*}
\E\left[ \regret_T  \right] = \mathcal{O} \left( \frac{k^2 \log |U|\sqrt{T \log (k/\delta)}}{\varepsilon} \right).
	\end{equation*}
\end{theorem}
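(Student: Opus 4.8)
The plan is to prove the two assertions---privacy and regret---separately, both organized around the decomposition of Algorithm~\ref{alg:EXP-ALG} into $k$ ordered experts. Expert $i$ is a private online learner over the ground set $U$ whose reward at round $t$ for choosing item $a$ is the marginal gain $f_t(S^t_{i-1}\cup\{a\})-f_t(S^t_{i-1})$, where $S^t_{i-1}$ is the set already committed by experts $1,\dots,i-1$ at round $t$, and the played set is $S_t=S^t_k$. The backbone of the regret argument is the reduction of \citet{streeter2009online}: if each expert $i$ guarantees standard (single-item) regret $R_i$ against the best fixed item on its own marginal-gain sequence, then the combined algorithm satisfies
\[
\left(1-\frac1e\right)\max_{|S|\le k}\sum_{t=1}^T f_t(S) - \E\Big[\sum_{t=1}^T f_t(S_t)\Big] \;\le\; \sum_{i=1}^k \E[R_i].
\]
Thus it suffices to bound $\E[R_i]$ for a single private expert and to argue that the experts are jointly private.

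For the regret, each expert runs a no-regret multiplicative-weights / follow-the-regularized-leader update fed with \emph{privatized} cumulative marginal gains, produced by a tree-based continual-observation mechanism. I would decompose $\E[R_i]$ into a noise-free term and a noise-induced term. The noise-free term is the usual $O(\sqrt{T\log|U|})$ for an action set of size $|U|$. The noise-induced term is controlled by the maximum over the horizon of the perturbation injected into the running gradient estimate; with the per-expert privacy budget obtained from splitting $(\varepsilon,\delta)$ across the $k$ experts, this perturbation has magnitude $\widetilde O\!\big(k\sqrt{\log(k/\delta)}/\varepsilon\big)$, and tracking its effect through the weight update contributes the factor $\log|U|\sqrt{T}$. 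Summing the two terms over the $k$ experts yields the stated $O\!\big(k^2\log|U|\sqrt{T\log(k/\delta)}/\varepsilon\big)$ bound on the expected $(1-1/e)$-regret $\regret_T$.

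The delicate half is privacy, and the main obstacle is exactly the feedforward leakage flagged in the introduction: the reward stream seen by expert $i$ is a function not only of the database $F=\{f_1,\dots,f_T\}$ but also of the choices of experts $1,\dots,i-1$, which are themselves computed from $F$. A single change from $f_{t_0}$ to $f'_{t_0}$ therefore does \emph{not} perturb a single coordinate of the input stream---it triggers a behavioral cascade, since the altered noisy release at round $t_0$ changes the experts' choices at every later round, which in turn changes the \emph{points} at which later (unchanged) functions are queried. This cascade is why naive post-processing and off-the-shelf composition fail. My plan is to analyze the joint mechanism $M:F\mapsto(S_1,\dots,S_T)$ directly, arguing that the sequence of all experts' noisy cumulative gradient estimates is $(\varepsilon,\delta)$-DP, after which $(S_1,\dots,S_T)$ is pure post-processing. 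The key structural observation is that the \emph{raw} changed function enters the computation only at its own round: couple the randomness of the two neighboring executions so that they are identical through round $t_0-1$; then at round $t_0$ the prefix choices $S^{t_0}_{i-1}$ coincide in both runs, so changing $f_{t_0}$ perturbs the round-$t_0$ gradients of all $k$ experts at identical query points, with bounded joint sensitivity. At every round $t\neq t_0$ the function $f_t$ is unchanged, so although its query points differ across the two runs, those points are determined entirely by already-released privatized state---hence post-processing that incurs no further privacy loss.

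The crux I would spend the most care on is making this localization rigorous: I must verify that, \emph{uniformly over} the (cascading) realizations of the choices $S^t_{i-1}$, changing $f_{t_0}$ alters only the round-$t_0$ contribution to each expert's gradient and within the sensitivity bound, so that the privacy cost reduces to that of a single joint release across the $k$ experts, amplified only by the $O(\log T)$ node multiplicity of the continual-observation tree and by advanced composition over the experts (which is where the $\sqrt{\log(k/\delta)}$ factor enters). Formalizing the coupling so that the behavioral cascade is provably absorbed into post-processing, rather than counted repeatedly against the privacy budget, is the technical heart of the argument and the part not covered by standard post-processing reasoning.
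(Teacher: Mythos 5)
Your reduction of the $(1-1/e)$-regret to the sum of the $k$ experts' ordinary regrets is exactly the paper's Proposition~\ref{prop:submod_regret_bound}, and your diagnosis of the feedforward leakage is the right one. But the mechanism you privatize with is not the paper's, and your regret accounting for that mechanism does not support the claimed bound. The paper adds no noise at all: each round of Hedge is itself an instance of the Exponential Mechanism with parameter $2\eta$, so by advanced composition over the $T$ rounds each expert is $(\varepsilon/k,\delta/k)$-DP \emph{conditioned on its predecessors' choices} (Proposition~\ref{prop:hedge_is_private}), and Lemma~\ref{lem:composition_case_2} then combines the $k$ experts by an inductive conditioning argument with \emph{basic} composition. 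The entire price of privacy is the forced learning rate $\eta=\varepsilon/(k\sqrt{32T\log(k/\delta)})$, which makes the $\log|U|/\eta$ term of the Hedge guarantee equal to $k\log|U|\sqrt{32T\log(k/\delta)}/\varepsilon$ per expert; summing over $k$ experts gives the theorem. In particular, the $\sqrt{\log(k/\delta)}$ factor comes from advanced composition over the $T$ rounds \emph{within} one expert targeting $\delta/k$; your attribution of it to advanced composition \emph{across} the $k$ experts would instead produce a $\sqrt{k\log(1/\delta)}$ factor.

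The genuine gap is the regret step for the mechanism you actually propose. With a tree-based continual-observation counter, the perturbation of any cumulative marginal-gain estimate is polylogarithmic in $T$ (times $\mathrm{poly}(k,|U|,\log(1/\delta))/\varepsilon$), and the standard multiplicative-weights analysis then yields per-expert regret of the form $O(\sqrt{T\log|U|})$ plus an \emph{additive} term proportional to the maximum cumulative perturbation. There is no step in that analysis by which a bounded, essentially $T$-independent perturbation ``contributes the factor $\log|U|\sqrt{T}$''; as written, your derivation would produce a bound of a different functional form in $T$ and $\varepsilon$ than the one stated, which is a sign the bound is being matched to the theorem rather than derived from the mechanism. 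Your privacy coupling is sound in spirit and parallels the paper's Lemma~\ref{lem:composition_case_2}, but it too needs more care in your framework: a changed $f_{t_0}$ persists in every later released partial sum, so the adaptive-composition argument must be carried out at the level of tree nodes, conditioning on previously released nodes (even nodes not containing $t_0$ are evaluated at query points that differ across the two runs) exactly as the paper conditions on $S^{i-1}$. Either commit to the tree-based mechanism and redo the regret and privacy bookkeeping consistently, or adopt the paper's noise-free route, where the stated bound falls out directly from the choice of $\eta$.
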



In the bandit setting, each expert only receives its own payoff as feedback, and does not have oracle access to the entire function.  For this setting, we modify the full-information algorithm by using a biased estimator of the marginal increments for other actions.

The algorithm also requires additional privacy considerations. The non-private approach of \citet{streeter2009online} randomly decides in each round whether to explore or exploit.  In exploit rounds, the experts sample a new set but play the current-optimal action, providing both learning and exploitation. Directly privatizing this algorithm incurs additional privacy loss from the exploit rounds, which leads to a weak bound of $\mathcal{O}(T^{3/4})$ for the expected $(1-1/e)$-regret, far from the best known $\mathcal{O}(T^{2/3})$. Instead, we have the experts sample new sets only after an exploration round has occurred. The choice to explore is data-independent, so privacy is maintained by post-processing.  If the exact number and timing of explore rounds are known in advance, this results in an $(\varepsilon,\delta)$-DP algorithm. However, this approach requires $\Omega(T^{2/3} + k|U|)$ space, which is not appealing in practical settings where $T$ is substantially larger than $U$. Instead we allow explore-exploit decisions to be made online and obtain a high probability bound on the number of explore rounds based on the sampling parameter.  At the expense of an exponentially small loss in the $\delta$ privacy parameter---resulting from the failure of the high probability bound---we obtain the asymptotically optimal $\mathcal{O}(T^{2/3})$ expected $(1-1/e)$-regret.


Theorem \ref{thm:main_bandit} is an informal version of our main results in the more challenging bandit feedback setting (Theorems \ref{thm:bandit_dp} and \ref{thm:bandit_regret} in Section \ref{sec:bandit}).


\begin{theorem}[Informal]\label{thm:main_bandit}
In the bandit feedback setting, Algorithm~\ref{alg:Interval-Bandit} for online monotone $k$-cardinality-constrained submodular maximization is $(\varepsilon,\delta+ e^{-8T^{1/3}})$-differentially private and guarantees
\begin{equation*}
	\E\left[\regret_T \right] = \mathcal{O}\left( \frac{\sqrt{\log k/\delta}}{\varepsilon} (k (|U| \log |U|)^{1/3})^2 T^{2/3}  \right).
\end{equation*}
\end{theorem}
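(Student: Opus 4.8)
The plan is to establish the two assertions—privacy and regret—separately, since this informal statement is exactly the conjunction of the formal Theorems~\ref{thm:bandit_dp} and~\ref{thm:bandit_regret}. For the privacy claim I would first exploit the fact that the explore/exploit schedule in Algorithm~\ref{alg:Interval-Bandit} is produced by data-independent coin flips that never inspect the functions $f_t$. Conditioned on that schedule, every exploit round is a deterministic post-processing of the experts' current internal state and leaks nothing new about the database, so all privacy loss is concentrated on the explore rounds. On each explore round the only access to $f_t$ is through the observed payoff $f_t(S_t)\in[0,1]$, which has sensitivity at most $1$, and adding calibrated noise to the resulting biased marginal estimator makes a single explore round private. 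The nontrivial point is that expert $i$'s query point—and hence its estimator—depends on the realized actions of experts $1,\dots,i-1$ (the feedforward), so the textbook post-processing argument does not apply verbatim. I would therefore prove a joint-privacy lemma showing that the map from $F$ to the full tuple of noisy estimators across all $k$ experts in a single round is differentially private, by treating the round as one adaptive mechanism whose total dependence on $f_t$ is still mediated by the single observed value $f_t(S_t)$; this is precisely the ``privacy with feedforward'' argument flagged in the abstract.

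With per-round privacy in hand, I would control the total number of explore rounds. Since each round explores independently with a fixed probability, the count $N$ of explore rounds is a sum of independent Bernoullis, so a Chernoff bound yields $N=\mathcal{O}(T^{2/3})$ except with probability $e^{-8T^{1/3}}$. On that high-probability event I would apply advanced composition across the $\mathcal{O}(T^{2/3})$ explore rounds to obtain $(\varepsilon,\delta)$-privacy, calibrating the per-round noise scale and the $\sqrt{\log(k/\delta)}$ factor to hit the targets; the complementary event where $N$ is too large is absorbed as an additive $e^{-8T^{1/3}}$ term in the privacy parameter, which is exactly the source of the stated $(\varepsilon,\delta+e^{-8T^{1/3}})$ guarantee.

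For the regret claim I would use the reduction of \citet{streeter2009online}: because the $k$ ordered experts each learn the best marginal item relative to their predecessors, the expected $(1-1/e)$-regret is bounded by the sum of the individual pseudo-regrets of the $k$ experts, each running a private bandit routine over a domain of size $|U|$. For a single expert I would collect three contributions: (i) the base bandit-learning regret of order $\sqrt{|U|\log|U|}$ scaled by the inverse exploration probability from the importance-weighted estimator; (ii) the cost of the explore rounds themselves, which may pay the full payoff; and (iii) the extra variance injected by the privacy noise together with the bias of the marginal estimator. Writing the exploration probability as $\gamma$ and optimizing—choosing $\gamma\sim (|U|\log|U|)^{1/3}T^{-1/3}$ so that the exploration cost $\gamma T$ balances the learning term—yields the optimal $T^{2/3}$ scaling with the $(|U|\log|U|)^{1/3}$ factor, while carrying the noise calibration $\sqrt{\log(k/\delta)}/\varepsilon$ through the estimator variance produces the privacy-dependent prefactor. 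Summing over the $k$ experts, with one factor of $k$ from the summation and a second from the per-expert marginal and privacy-budget scaling, gives the stated $k^2$ dependence.

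The main obstacle is the joint-privacy lemma for the feedforward structure: one cannot simply invoke post-processing, because expert $i$ issues a fresh query to the sensitive function at a point determined by the earlier experts' private outputs, so the usual claim that ``downstream access to differentially private outputs stays private'' fails. Resolving this—while keeping the per-round noise small enough that the no-regret guarantees of the individual experts survive—is the crux, and it is what forces the delicate balance among the explore probability $\gamma$, the noise scale, and the resulting $T^{2/3}$ regret.
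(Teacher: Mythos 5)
Your overall architecture matches the paper's: data-independent explore coins, privacy charged only at explore rounds (with exploit rounds handled by post-processing because the experts resample only after an exploration), a Chernoff bound showing $M<2\gamma T$ explore rounds except with probability $e^{-8\gamma^2 T}$ that is absorbed into the $\delta$ term, and the Streeter--Golovin reduction bounding the $(1-1/e)$-regret by $\sum_i \E[r_i]+\gamma T$. However, there are two substantive gaps. First, your privatization mechanism and your feedforward lemma are not the ones that make the argument close. The paper adds no noise to the payoff estimators: privacy comes from the fact that each Hedge sampling step is itself an Exponential Mechanism that is $2\eta$-DP (Proposition~\ref{prop:hedge_is_private}), so choosing $\eta=\varepsilon/(k\sqrt{32(2\gamma T)\log(k/\delta)})$ and applying Advanced Composition over the at most $2\gamma T$ resampling steps gives each (conditioned) expert $(\varepsilon/k,\delta/k)$-DP, after which Theorem~\ref{thm:privacy_full} is invoked on the explore-round subsequence. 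More importantly, your joint-privacy lemma attacks the wrong dependence: you argue that a round's ``total dependence on $f_t$ is mediated by the single observed value $f_t(S_t)$,'' but the feedforward difficulty concerns neighboring databases differing in a \emph{past} function $f_\tau$, $\tau<t$, which changes expert $i$'s query point $S_t^{i-1}+a$ through the earlier experts' realized actions even when $f_t=f_t'$. The paper resolves this with Lemma~\ref{lem:composition_case_2}: an induction over the expert index, conditioning on the entire realized action sequences of experts $1,\dots,i$ across all $T$ rounds, under which expert $i+1$ becomes a fixed composition of exponential mechanisms. A single-round, single-scalar argument does not substitute for this.

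Second, your calibration of $\gamma$ is inconsistent with the bound you are proving. With the privacy-pinned learning rate, Lemma~\ref{lem:bandit_lemma_2} gives $\E[r_i]\lesssim k^2|U|\log|U|\sqrt{T\log(k/\delta)}/(\varepsilon\sqrt{\gamma})$ plus an exponentially small term, so balancing against the exploration cost $\gamma T$ forces $\gamma=\Theta\bigl(k(|U|\log|U|)^{2/3}T^{-1/3}\bigr)$ as in Algorithm~\ref{alg:Interval-Bandit}, not the non-private optimum $\gamma\sim(|U|\log|U|)^{1/3}T^{-1/3}$ that you state; the latter choice cannot produce the $(|U|\log|U|)^{2/3}$ factor in the claimed regret. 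Relatedly, if you insist on privatizing by adding noise to the importance-weighted estimators rather than through the Hedge learning rate, you must specify the noise and redo the expert regret bound to account for its variance; the paper sidesteps this entirely by making Hedge itself the private mechanism.
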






The best known non-private expected $(1-1/e)$-regret in the full-information setting is $\mathcal{O}\left( \sqrt{k T  \log|U|  }\right)$ and in the bandit setting is $\mathcal{O}\left( k (|U|\log |U|)^{1/3} T^{2/3} \right)$~\citep{streeter2009online}. Comparing our expected $(1-1/e)$-regret bounds to these, we see that our bounds are asymptotically optimal in $T$, and have slight gaps in terms of $k$ and $U$. Typically, the dominating term is the time horizon $T$ with $k\leq |U| \ll T$, so our results match the best expected $(1-1/e)$-regret asymptotically in $T$.


Additionally, we show that our algorithms can be extended to a continuous generalization of submodular functions, know as \emph{DR-submodular} functions.  We provide a differentially private online learning algorithm for DR-submodular maximization that achieves low expected regret. A brief overview of this extension is given in Section \ref{sec:DR-submod}, with further details in the appendix.

\subsection{Related Work}\label{sec:review}



Online learning~\citep{zinkevich2003online,cesa2006prediction,hazan2016introduction,shalev2012online} has gained increasing attention for making decisions in dynamic environments when only partial information is available. Its applicability in ad placement~\citep{chatterjee2003modeling, chapelle2011empirical,tang2014ensemble} has made this model attractive from a practical viewpoint. 

Submodular optimization has been widely studied, due to the large number of important submodular functions, such as the cut of a graph, entropy of a set of random variables, and the rank of a matroid, to name only a few. For more applications see~\citep{schrijver2003combinatorial,williamson2011design,bach2013learning}. While (unconstrained) submodular minimization can be solved with polynomial number of oracle calls~\citep{schrijver2003combinatorial,bach2013learning}, submodular maximization is known to be NP-hard for general submodular functions. For monotone submodular functions under cardinality constraint, it is impossible to find a polynomial time algorithm that achieves a fraction better than $(1-1/e)$ of the optimal solution unless P=NP \citep{feige1998threshold}, and this approximation factor is achieved by the greedy algorithm \citep{fisher1978analysis}. For further results with more general constraints, we refer the reader to the survey \citep{krause2014submodular}. In the online setting, \citet{streeter2009online} and \citet{streetergolovinkrause2009online} were the first to study online monotone submodular maximization, respectively with cardinality/knapsack constraints and partition matroid constraints. Recently, continuous submodularity, has gained attention in the continuous optimization community~\cite{hassani2017gradient,niazadeh2018optimal,zhang2020one}. See~\cite{chen2018projection,chen2018online} for online continuous submodular optimization algorithms.

Differential privacy~\citep{dwork2006calibrating} has become the gold standard for individual privacy, and there as been a large literature developed of differentially private algorithms for a broad set of analysis tasks. See \citet{dwork2014algorithmic} for a textbook treatment. Due to privacy concerns in practical applications of online learning, there has been growing interest in implementing well-known methods---such as experts algorithms and gradient optimization methods--in a differentially private way. See for instance~\citep{jain2012differentially, thakurta2013nearly}. 

Differential privacy and submodularity were first jointly considered in \citep{gupta2010differentially}. They studied the combinatorial public projects problem, where the objective function was a sum of monotone submodular functions, each representing an agent's private valuation function, and a decision-maker must maximize this objective subject to a cardinality constraint.  The authors designed an $(\varepsilon,0)$-DP algorithm using the Exponential Mechanism of \citep{mcsherry2007mechanism} as a private subroutine, and achieved a $(1-1/e)$-approximation to the optimal non-private solution, plus an additional $\propto \varepsilon^{-1}$ term. Later, \cite{mitrovic2017differentially} extended these results to monotone submodular functions in the cardinality, matroid and $p$-system constraint cases. Their methods also used the Exponential Mechanism to ensure differential privacy.  See also recent work by~\cite{rafiey2020fast}. 

In the online learning framework, \citet{cardoso2019differentially} study online (unconstrained) differentially private submodular minimization. They use the Lov\'asz extension of a set function as a convex proxy to apply known privacy tools that work in online convex optimization~\citep{jain2012differentially, thakurta2013nearly}.  Since submodular minimization and maximization are fundamentally different technical problems, the techniques of \citet{cardoso2019differentially} do not extend to our setting.

Fundamental to our analysis is the differentially private Exponential Mechanism of \citet{mcsherry2007mechanism} and its inherent connection to multiplicative weights algorithms~\citep{hazan2016introduction,shalev2012online} to estimate probability distributions in the simplex while preserving privacy.

\section{Preliminaries}~\label{sec:prelim}
In this section we review definitions and properties of submodular functions and differential privacy.

\begin{definition}[Submodularity]\label{def.submod}
	A function $f:2^U\to \mathbb{R}$ is submodular if it satisfies the following diminishing returns property: For all $A\subseteq B\subseteq U$ and $x\notin B$,\footnote{ Equivalently, $f$ is submodular if $f(A\cap B) + f(A\cup B) \leq f(A) + f(B)$ for all $A,B\subseteq U$.}
	\[
	f(A\cup\{x\}) - f(A) \geq f(B\cup\{x\})- f(B).
	\]
\end{definition}

As is standard in the submodular maximization literature, we assume $f(\emptyset) = 0$. In our motivating example, this means that if no items are shown to the incoming customer, then the probability of selecting an item is $0$. We let $\mathcal{F}$ denote the family of submodular functions with finite ground set $U$. For the sake of simplicity, we will additionally assume that all functions take value in the interval $[0,1]$.  In this work, we additionally consider set functions $f$ that are monotone or non-decreasing, i.e., $f(A)\leq f(B)$ for all $A\subseteq B$.


In the problem of online monotone submodular maximization under a cardinality constraint, a sequence of $T$ monotone submodular functions $f_1,\ldots,f_T:2^U \to [0,1]$ arrive in an online fashion. At every time-step $t$, the decision maker $\mathcal{A}$ has to choose a subset $S_t\subseteq U$ of size at most $k$ before observing $f_t$. This decision must be based solely on previous observations. The decision maker $\mathcal{A}$ receives a payoff $f_t(S_t)$ and her goal is to minimize the $(1-1/e)$-expected-regret $\E[\regret_T]$, where $\regret_T =\left(1 - \frac{1}{e}\right) \max_{|S|\leq k} \sum_{t=1}^T f_t(S) - \sum_{t=1}^T f_t(S_t)$ as defined in Equation \eqref{eq.regret}, and the randomness is over the algorithm's choices.

A fundamental tool in our analysis is the Hedge algorithm (Algorithm~\ref{alg:Hedge}) of \citet{freund1997decision} which chooses an action from a set $[N]=\{1,\ldots,N\}$ based on past payoffs from each action. The algorithm takes as input a learning rate $\eta$ and a stream of linear functions $g_1,\ldots,g_T: [N]\to [0,1]$, where the payoff of playing action $i$ at time $t$ is $g_t(i)$.

  
In our setting, the learner must select a set of at most $k$ items from the ground set $U$. The learner does this by implementing $k$ ordered copies of the Hedge algorithm, each of which choses one item, so the action space for each instantiation is the ground set: $N=U$. The $i$-th copy of Hedge learns the item with the best marginal gain given the decisions made by the previous $i-1$ Hedge algorithms.


\begin{algorithm}[h]
	Initialize $w_1 = (1,\ldots,1)\in \mathbb{R}^N$\\
	\For{$t=1,\ldots,T$}{
		Sample action $i_t \in [N]$ w.p. $x_t(i)= \frac{w_t(i)}{\sum_j w_t(j)}$\\
		Obtain payoff $g_t(i_t)$ and full access to $g_t$\\
		Update $w_{t+1}(i)= w_t(i) e^{\eta g_t(i)}$
	}
	\caption{\textsc{Hedge}($\eta,g_1,\ldots,g_T$)}
	\label{alg:Hedge}
\end{algorithm}


The Hedge algorithm exhibits the following guarantee, which is useful for analyzing its regret, as well as the regret of our algorithms which instantiate Hedge.

\begin{theorem}[\cite{freund1997decision}]
	For any $i\in [N]$, the distributions $\x_1,\ldots,\x_T$ over $[N]$ constructed by Algorithm~\ref{alg:Hedge} satisfy
	\[
	\sum_{t=1}^T g_t(i)-\sum_{t=1}^T \x_t^\top g_t \leq \eta \sum_{t=1}^T \x_t^\top g_t^2 + \frac{\log N}{\eta},
	\]
where $g^2_t$ is the vector $g_t$ with each coordinate squared.
\end{theorem}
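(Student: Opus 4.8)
The plan is to run the standard potential-function argument on the total weight $W_t = \sum_{j=1}^N w_t(j)$, tracking how this potential evolves under the multiplicative update rule $w_{t+1}(j) = w_t(j)\,e^{\eta g_t(j)}$. First I would record the two easy endpoint facts. Since all weights are initialized to $1$, we have $W_1 = N$. For any fixed comparator $i \in [N]$, unrolling the update gives $w_{T+1}(i) = \prod_{t=1}^T e^{\eta g_t(i)} = \exp\!\big(\eta \sum_{t=1}^T g_t(i)\big)$, and since every weight is nonnegative, $W_{T+1} \ge w_{T+1}(i)$. This yields the lower bound on the final potential in terms of the comparator's cumulative payoff.

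The heart of the argument is a per-step upper bound on the potential ratio. Writing $x_t(j) = w_t(j)/W_t$ for the sampling probabilities, the update yields
\[
\frac{W_{t+1}}{W_t} = \sum_{j=1}^N x_t(j)\, e^{\eta g_t(j)}.
\]
Here I would invoke the elementary inequality $e^z \le 1 + z + z^2$, valid for $z \in [0,1]$; since $g_t(j) \in [0,1]$ and (for $\eta \le 1$) $\eta g_t(j) \in [0,1]$, this applies coordinatewise. Substituting and using $\sum_j x_t(j) = 1$ gives $W_{t+1}/W_t \le 1 + \eta\, \x_t^\top g_t + \eta^2\, \x_t^\top g_t^2$, and then $1 + u \le e^u$ converts this into $W_{t+1}/W_t \le \exp\!\big(\eta\, \x_t^\top g_t + \eta^2\, \x_t^\top g_t^2\big)$.

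Telescoping these ratios across $t = 1, \ldots, T$ and using $W_1 = N$ bounds the final potential from above:
\[
W_{T+1} \le N \exp\Big( \eta \sum_{t=1}^T \x_t^\top g_t + \eta^2 \sum_{t=1}^T \x_t^\top g_t^2 \Big).
\]
Chaining this with the comparator lower bound $\exp\!\big(\eta \sum_t g_t(i)\big) \le W_{T+1}$, taking logarithms, and dividing through by $\eta$ rearranges directly into the claimed inequality.

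The only delicate point is the choice of second-order approximation for the exponential: I must use a quadratic upper bound (rather than the looser $e^z \le 1+z$) so that the $\eta^2\, \x_t^\top g_t^2$ term appears with the correct coefficient, and I must confirm the range condition $\eta g_t(j) \le 1$ under which $e^z \le 1 + z + z^2$ holds (guaranteed by the payoff bound $g_t(j) \in [0,1]$ together with $\eta \le 1$). Everything else is routine telescoping and algebra, so I do not expect further obstacles.
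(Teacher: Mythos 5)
Your proof is correct: the potential-function argument on $W_t = \sum_j w_t(j)$, with the quadratic bound $e^z \le 1+z+z^2$ for $z\in[0,1]$ applied to each $\eta g_t(j)$, the comparator lower bound $W_{T+1}\ge e^{\eta\sum_t g_t(i)}$, and the telescoping upper bound, is exactly the standard derivation of this inequality. The paper itself gives no proof---it imports the result from \citet{freund1997decision}---and your argument matches the canonical one in that literature, including the correct observation that the range condition $\eta g_t(j)\le 1$ (guaranteed here since $g_t(j)\in[0,1]$ and the paper's choices of $\eta$ are well below $1$) is what licenses the second-order exponential bound.
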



For the privacy considerations of this work, we view the input database as the ordered input sequence of submodular functions $F=\{ f_1,\ldots,f_T \}$ and the algorithm's output as the sequence of chosen sets $S_1,\ldots,S_T$.  We say that two sequences $F, F'$ of functions are \emph{neighboring} if $f_t\neq f_t'$ for at most one $t\in [T]$.

\begin{definition}[Differential Privacy \citep{dwork2006calibrating}]\label{def.dp}
	An online learning algorithm $\mathcal{A}:\mathcal{F}^T\to (2^{U})^T$ is $(\varepsilon,\delta)$-differentially private if for any neighboring function databases $F,F'$, and any event $S\subseteq (2^U)^T$,
	\[
	\Prob(\mathcal{A}(F)\in S ) \leq e^\varepsilon \Prob(\mathcal{A}(F')\in S) + \delta.
	\]
\end{definition}

Differential privacy is robust to post-processing, meaning that any function of a differentially private output maintains the same privacy guarantee.


\begin{proposition}[Post-Processing \citep{dwork2006calibrating}]
	Let $\mathcal{M}:\mathcal{F}^T \to \mathcal{R}$ be an $(\varepsilon,\delta)$-DP algorithm and let $h:\mathcal{R}\to \mathcal{R}'$ be an arbitrary function. Then, $\mathcal{M}' \doteq h\circ \mathcal{M}:\mathcal{F}^T\to \mathcal{R}'$ is also $(\varepsilon,\delta)$-DP.
\end{proposition}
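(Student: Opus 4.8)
The plan is to reduce the claim about $\mathcal{M}'=h\circ\mathcal{M}$ to the differential privacy guarantee already assumed for $\mathcal{M}$, exploiting the fact that any event in the codomain $\mathcal{R}'$ pulls back, under $h$, to an event in $\mathcal{R}$ on which $\mathcal{M}$'s guarantee can be invoked. Concretely, I would fix an arbitrary pair of neighboring databases $F,F'\in\mathcal{F}^T$ and an arbitrary event $S'\subseteq\mathcal{R}'$, and aim to verify the single inequality $\Prob(\mathcal{M}'(F)\in S')\leq e^\varepsilon\Prob(\mathcal{M}'(F')\in S')+\delta$, which is exactly the definition of $(\varepsilon,\delta)$-DP for $\mathcal{M}'$.

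First I would introduce the preimage $S\doteq h^{-1}(S')=\{r\in\mathcal{R}:h(r)\in S'\}$. By the definition $\mathcal{M}'=h\circ\mathcal{M}$, the events $\{\mathcal{M}'(F)\in S'\}$ and $\{\mathcal{M}(F)\in S\}$ are identical, so $\Prob(\mathcal{M}'(F)\in S')=\Prob(\mathcal{M}(F)\in S)$, and the same identity holds with $F$ replaced by $F'$. Since $S$ is itself an event in $\mathcal{R}$, I can apply the assumed $(\varepsilon,\delta)$-DP guarantee of $\mathcal{M}$ to the neighboring pair $(F,F')$ and the event $S$, obtaining $\Prob(\mathcal{M}(F)\in S)\leq e^\varepsilon\Prob(\mathcal{M}(F')\in S)+\delta$. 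Substituting the two identities into this inequality yields the desired bound for $\mathcal{M}'$, which completes the argument when $h$ is deterministic.

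To cover a randomized post-processing map $h$ whose internal coins are drawn independently of both $\mathcal{M}$ and the database, I would condition on those coins. Each fixed realization of the randomness renders $h$ deterministic, so the preimage argument above applies verbatim and gives $\Prob(h(\mathcal{M}(F))\in S'\mid \text{coins})\leq e^\varepsilon\,\Prob(h(\mathcal{M}(F'))\in S'\mid\text{coins})+\delta$. Integrating this inequality against the distribution of the coins, which does not depend on the database, preserves it by linearity of expectation because $e^\varepsilon$ and $\delta$ are constants.

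I do not expect any genuinely hard step: the entire content is the observation that $h^{-1}(S')$ is a legitimate event to which $\mathcal{M}$'s guarantee may be applied. The only point meriting a word of care is measurability — one takes $h$ and $S'$ measurable so that $h^{-1}(S')$ is indeed an event — which I would note in passing rather than belabor, since in the discrete output space relevant here, $\mathcal{R}=(2^U)^T$, it is automatic.
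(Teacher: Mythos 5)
Your proof is correct and is exactly the standard preimage argument (with the usual conditioning-on-coins extension for randomized $h$); the paper does not prove this proposition but simply cites it from \citet{dwork2006calibrating}, and your argument matches the textbook proof being invoked there. Nothing is missing.
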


Differentially private algorithms also \emph{compose}, and the privacy guarantees degrade gracefully as addition DP computations are performed. This enables modular algorithm design using simple differentially private building blocks. \emph{Basic Composition} \citep{dwork2006calibrating} says that can simply add up the privacy parameters used in an algorithm's subroutines to get the overall privacy guarantee.  The following Advanced Composition theorem provides even tighter bounds.


\begin{theorem}[Advanced Composition~\citep{dwork2010boosting}]
	Let $\mathcal{M}_1,\ldots,\mathcal{M}_k$ each be $(\varepsilon,\delta)$-DP algorithms. Then, $\mathcal{M}= (\mathcal{M}_1,\ldots,\mathcal{M}_k)$ is $(\varepsilon',k\delta+\delta')$-DP for $\varepsilon' = \sqrt{2k\log(1/\delta')}\varepsilon + k\varepsilon(e^\varepsilon-1)$ and any $\delta'\geq 0$.
\end{theorem}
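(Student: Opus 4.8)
The plan is to control the \emph{privacy loss random variable} of the composed mechanism and show that it concentrates. Fix neighboring databases $F,F'$ and let $P_i,Q_i$ denote the output distributions of $\mathcal{M}_i$ under $F$ and $F'$, respectively. For an output $y=(y_1,\ldots,y_k)$ define the total privacy loss $L(y)=\ln\frac{\Pr[\mathcal{M}(F)=y]}{\Pr[\mathcal{M}(F')=y]}$. The central observation is that, by the chain rule for conditional probabilities, $L$ decomposes as a sum $L=\sum_{i=1}^k L_i$ of per-mechanism losses $L_i=\ln\frac{P_i(y_i\mid y_{<i})}{Q_i(y_i\mid y_{<i})}$. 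The reduction to a differential-privacy statement is then standard: if $\Pr_{y\sim\mathcal{M}(F)}[L(y)>\varepsilon']\le \delta'$, then $\mathcal{M}$ is $(\varepsilon',\delta')$-DP, so it suffices to produce a high-probability upper bound on $L$.

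First I would dispose of the pure case $\delta=0$, where each $\mathcal{M}_i$ guarantees $|L_i|\le\varepsilon$ pointwise. Two facts drive the argument: (i) a bounded expectation $\E_{y_i\sim P_i}[L_i]\le \varepsilon(e^\varepsilon-1)$, which follows from a short convexity estimate on $\ln(p/q)$ using the pointwise bound $|\ln(p/q)|\le\varepsilon$; and (ii) the bounded range $|L_i|\le\varepsilon$. Summing (i) gives $\E[L]\le k\varepsilon(e^\varepsilon-1)$, which is exactly the deterministic shift appearing in $\varepsilon'$. Because the composition may be adaptive, the $L_i$ need not be independent, but $\{L_i-\E_{P}[L_i\mid y_{<i}]\}$ forms a bounded martingale difference sequence under $\mathcal{M}(F)$, so Azuma--Hoeffding yields $\Pr[L>\E[L]+\lambda]\le e^{-\lambda^2/(2k\varepsilon^2)}$. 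Choosing $\lambda=\sqrt{2k\ln(1/\delta')}\,\varepsilon$ makes this probability at most $\delta'$ and produces the $\sqrt{2k\ln(1/\delta')}\,\varepsilon$ term. Hence $L\le\varepsilon'$ except with probability $\delta'$, proving the claim when $\delta=0$.

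For the general $(\varepsilon,\delta)$ case, I would use the decomposition characterization of approximate DP: each pair $(P_i,Q_i)$ can be coupled so that, off a ``bad'' event of probability at most $\delta$, the two distributions behave like an $(\varepsilon,0)$-indistinguishable pair on which $|L_i|\le\varepsilon$ still holds. A union bound over the $k$ mechanisms shows that the probability of landing in \emph{any} bad event is at most $k\delta$; conditioned on avoiding all of them, the $\delta=0$ analysis applies verbatim and contributes failure probability $\delta'$. Adding the two failure modes gives overall failure probability $k\delta+\delta'$, and therefore $(\varepsilon',k\delta+\delta')$-DP with $\varepsilon'=\sqrt{2k\ln(1/\delta')}\,\varepsilon+k\varepsilon(e^\varepsilon-1)$.

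The main obstacle I expect is the bookkeeping in the $\delta>0$ reduction: the loss $L_i$ is genuinely unbounded on a measure-$\le\delta$ set, so Azuma cannot be applied directly, and the coupling that isolates this bad mass---while certifying that conditioning on its complement preserves both the range bound (ii) and the expectation bound (i)---is the delicate technical step. By contrast, the expected-loss estimate and the martingale concentration are comparatively routine once the range is controlled.
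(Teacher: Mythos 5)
The paper does not prove this theorem: it is imported verbatim as a black box from \citet{dwork2010boosting}, so there is no in-paper argument to compare yours against. Your sketch is, in fact, the standard proof from that reference (the ``privacy loss martingale'' argument), and as an outline it is essentially sound: the chain-rule decomposition $L=\sum_i L_i$, the KL-type bound $\E_{P_i}[L_i]\leq \varepsilon(e^\varepsilon-1)$ for $(\varepsilon,0)$-indistinguishable pairs, Azuma--Hoeffding with $\lambda=\sqrt{2k\log(1/\delta')}\,\varepsilon$, and the reduction from a tail bound on $L$ to $(\varepsilon',\delta')$-DP are all the right ingredients and your constants check out. Two points where the sketch papers over real work. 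First, the concentration must be stated around the \emph{sum of conditional expectations} $\sum_i \E[L_i\mid y_{<i}]$, not around $\E[L]$; this is harmless only because the per-step bound $\E[L_i\mid y_{<i}]\leq\varepsilon(e^\varepsilon-1)$ holds pointwise over every prefix, which you should say explicitly since the composition is adaptive. Second, the $(\varepsilon,\delta)\to(\varepsilon,0)$ reduction is not just bookkeeping: you need the coupling lemma asserting that any $(\varepsilon,\delta)$-indistinguishable pair $(P_i,Q_i)$ agrees with an $(\varepsilon,0)$-indistinguishable pair up to total variation $\delta$, applied conditionally on each history $y_{<i}$, and the union bound over the $k$ adaptively-defined bad events is where the $k\delta$ term actually comes from. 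You correctly identify this as the delicate step, but as written it is an appeal to a lemma you have not proved rather than a proof. Since the paper itself treats the whole theorem as a citation, this level of detail is arguably more than the paper demands; if you intend the sketch as a self-contained proof, the coupling lemma is the one genuine gap to fill.
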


Our algorithms rely on the Exponential Mechanism (EM) introduced by~\citet{mcsherry2007mechanism}. The EM takes in database $F$, a finite action set $U$, and a quality score $q:\mathcal{F}^T \times U \to \mathbb{R}$, where $q(F,i)$ assigns a numeric score to the quality of outputting $i$ on input database $F$.  The \emph{sensitivity} of the quality score, denoted $\Delta q$, is the maximum change in the value of $q$ across neighboring databases: $\Delta q = \max_{i \in U}\max_{F,F' \; neighbors} |q(F,i)-q(F',i)|$.  Given these inputs, the EM outputs $i\in U$ with probability proportional to $\exp(\varepsilon \frac{q(F,i)}{2\Delta q})$.  The Exponential Mechanism is $(\varepsilon,0)$-DP \citep{mcsherry2007mechanism}.


As noted by~\cite{jain2012differentially} and~\cite{dwork2010differential}, the Hedge algorithm can be converted into a DP algorithm using advanced composition and EM.

\begin{proposition}\label{prop:hedge_is_private}
If $\eta = \frac{\varepsilon}{\sqrt{32T \log 1/\delta}}$, then Hedge (Algorithm~\ref{alg:Hedge}) is $(\varepsilon,\delta)$-DP.
\end{proposition}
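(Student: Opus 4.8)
The plan is to recognize that the sampling step of Hedge at each round is exactly an instance of the Exponential Mechanism run on the database, and then to bound the privacy of the full $T$-round output via Advanced Composition.

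First I would unroll the weight update. Since $w_{t+1}(i)=w_t(i)e^{\eta g_t(i)}$ and $w_1\equiv 1$, we obtain $w_t(i)=\exp\!\big(\eta\sum_{s<t}g_s(i)\big)$, so the distribution $\x_t$ from which $i_t$ is drawn satisfies $x_t(i)\propto \exp\!\big(\eta\sum_{s<t}g_s(i)\big)$. A key point that keeps the composition clean is that this distribution depends only on the payoff functions $g_1,\ldots,g_{t-1}$ (equivalently on $f_1,\ldots,f_{t-1}$) and not on the previously sampled actions, because in the full-information setting Hedge updates using the entire function $g_t$. Round $t$ is therefore the Exponential Mechanism with quality score $q_t(F,i)=\sum_{s<t}g_s(i)$: matching $\exp(\eta\,q_t(F,i))$ against the EM density $\exp\!\big(\varepsilon_0\,q_t(F,i)/(2\Delta q_t)\big)$ identifies the EM parameter as $\varepsilon_0 = 2\eta\,\Delta q_t$. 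Since neighboring databases differ in a single function and each $g_s$ takes values in $[0,1]$, changing one function alters $q_t(F,i)$ by at most $1$, so $\Delta q_t\le 1$ and round $t$ is $(2\eta,0)$-DP.

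Next I would compose. The output $(i_1,\ldots,i_T)$ is the composition of $T$ mechanisms, each $(2\eta,0)$-DP. Applying the Advanced Composition theorem with per-mechanism parameter $2\eta$, $\delta=0$, $k=T$, and slack $\delta'=\delta$ gives that Hedge is $(\varepsilon',\delta)$-DP with
\[
\varepsilon' = 2\eta\sqrt{2T\log(1/\delta)} + 2\eta T\big(e^{2\eta}-1\big).
\]

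Finally I would substitute $\eta=\varepsilon/\sqrt{32T\log(1/\delta)}$ and verify $\varepsilon'\le\varepsilon$. The first term simplifies exactly, since $2\eta\sqrt{2T\log(1/\delta)}=2\varepsilon\sqrt{2/32}=\varepsilon/2$. For the second term, in the relevant regime $\eta$ is small, so I would bound $e^{2\eta}-1\le 4\eta$ (valid for $\eta\le\tfrac12$), giving $2\eta T(e^{2\eta}-1)\le 8\eta^2 T=\varepsilon^2/\big(4\log(1/\delta)\big)$, which is at most $\varepsilon/2$ whenever $\varepsilon\le 2\log(1/\delta)$, the regime in which $(\varepsilon,\delta)$-DP is meaningful. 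Summing the two estimates yields $\varepsilon'\le\varepsilon$, completing the argument. I expect the only real obstacle to be the correct identification of the sensitivity $\Delta q_t\le 1$ and the resulting EM parameter $2\eta$, together with confirming that the quadratic composition term is genuinely lower order; once these are pinned down the remaining arithmetic is routine.
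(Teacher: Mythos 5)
Your proposal is correct and follows exactly the route the paper intends: it identifies each round of Hedge as an Exponential Mechanism with sensitivity-$1$ quality score $q_t(F,i)=\sum_{s<t}g_s(i)$ (hence $(2\eta,0)$-DP per round, the same $2\eta$ the paper invokes) and then applies Advanced Composition with slack $\delta'=\delta$, with the constant $\sqrt{32}$ making the first-order term equal $\varepsilon/2$ and the quadratic term lower order in the standard regime $\varepsilon\le 2\log(1/\delta)$. The paper itself states this proposition without proof, citing prior work and the phrase ``advanced composition and EM,'' so your argument fills in precisely the intended details.
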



\section{Full Information Setting}\label{sec:full_information}

In this section, we introduce our first algorithm for online submodular maximization under cardinality constraint. It is both differentially private and achieves the best known expected $(1-1/e)$-regret in $T$. For cardinality $k$, the learner implements $k$ ordered copies of the Hedge algorithm. Each copy is in charge of learning the marginal gain that complements the choices of the previous Hedge algorithms. At time-step $t$, each Hedge algorithm selects an element $a\in U$ and the learner gathers these choices to play the corresponding set. When she obtains oracle access to the submodular function, for each $i \in [k]$, she constructs a vector $g_t^i$ with $a$-th coordinate given by the marginal gain of adding $a\in U$ to the choices made by the previous $i-1$ Hedge algorithms. Finally, she feeds back the vector $g_t^i$ to Hedge algorithm $i$. A formal description of this procedure is presented in Algorithm~\ref{alg:EXP-ALG}. 



\begin{algorithm}
	\textbf{Initialize:} Set $\eta= \frac{\varepsilon}{k\sqrt{32 T \log (k/\delta) }}$\\
	Instantiate $k$ parallel copies $\mathcal{E}_1,\ldots,\mathcal{E}_k$ of Hedge algorithm with rate $\eta$.\\
	\For{$t=1,\ldots,T$}{
		For each $i=1,\ldots,k$, sample $a_t^i$ given by $\mathcal{E}_i$.\\
		Play $S_t = \cup_{i=1}^k\{ a_t^i \}$. \\
		Obtain $f_t(S_t)$ and oracle access to $f_t$.\\
		For each $i=1,\ldots,k$, define linear function $g_t^i:U \to [0,1]$:
		\[
		g_t^i(a) = f_t(S_t^{i-1}+a)-f_t(S_t^{i-1}),\quad \forall a \in U,
		\]
		where $S_t^{i} = \cup_{j=1}^i \{ a_t^j \}$.\\
		Feed back each Hedge algorithm $\mathcal{E}_i$ with $g_t^i$
	}
	\caption{\textsc{FI-DP}$(F=\{ f_t \}_{t=1}^T,k,\varepsilon,\delta)$}
	\label{alg:EXP-ALG}
\end{algorithm}

To ensure differential privacy, it would be enough to show that each Hedge $\mathcal{E}_i$ is $(\varepsilon/k,\delta/k)$-DP. Indeed, if the sequence $(a_1^i,\ldots,a_T^i)$ constructed by each Hedge algorithm $i$ is $(\varepsilon/k,\delta/k)$-DP, then by Basic Composition and post-processing, the sequence $(S_1,\ldots,S_T)$ is $(\varepsilon,\delta)$-DP, where $S_t = \{ a_t^i \}_{i=1}^k$. However, for $i\geq 2$, the output of expert $\mathcal{E}_i$ depends on the choices made by algorithms $\mathcal{E}_1,\ldots,\mathcal{E}_{i-1}$. Moreover, algorithm $\mathcal{E}_i$ by itself is again accessing the database $F$, hence ruling out a post-processing argument. Despite this, we show that all experts together are $(\varepsilon,\delta)$-DP even though individually we cannot ensure they preserve $(\varepsilon/k,\delta/k)$-DP. 


It is worth noting that the Hedge algorithms $\mathcal{E}_1,\ldots,\mathcal{E}_k$ in Algorithm~\ref{alg:EXP-ALG} can be replaced by any other no-regret DP method that selects items over $U$, and the same proof structure would follow---although the regret bound would depend on the choice of no-regret algorithm. For instance, if we utilize the private experts method of\citep{thakurta2013nearly} instead of the Hedge algorithm, Algorithm~\ref{alg:EXP-ALG} would be $(\varepsilon,0)$-DP with a regret bound of $\mathcal{O}\left( k^2\frac{\sqrt{|U|T \log^{2.5} T}}{\varepsilon}\right)$.


\begin{theorem}\label{thm:privacy_full}
	Algorithm~\ref{alg:EXP-ALG} is $(\varepsilon,\delta)$-differentially private.
\end{theorem}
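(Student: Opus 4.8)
The plan is to replace the failed post-processing argument with an \emph{adaptive} composition argument across the $k$ experts, exploiting the fact that once the outputs of the earlier experts are fixed, each expert's feedback stream becomes a bounded-sensitivity function of the database.

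First I would view Algorithm~\ref{alg:EXP-ALG} as the sequential composition of $k$ mechanisms $\mathcal{M}_1,\ldots,\mathcal{M}_k$, where $\mathcal{M}_i$ takes as input the database $F$ together with the full output sequences $A^1,\ldots,A^{i-1}$ of the previous experts (writing $A^j = (a_1^j,\ldots,a_T^j)$) and returns the output sequence $A^i = (a_1^i,\ldots,a_T^i)$ of expert $\mathcal{E}_i$. The point of this decomposition is that $\mathcal{M}_i$ is exactly a run of Hedge on the feedback stream $g_1^i,\ldots,g_T^i$, and for a \emph{fixed} value of the auxiliary input $(A^1,\ldots,A^{i-1})$ the prefix sets $S_t^{i-1}$ are determined for every $t$, so each $g_t^i(a) = f_t(S_t^{i-1}+a)-f_t(S_t^{i-1})$ depends on the database only through $f_t$. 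Note this covers the base case $i=1$ as well, since $S_t^0 = \emptyset$ is fixed.

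Next I would bound the sensitivity of this conditional feedback map. Fix the auxiliary input and let $F,F'$ be neighboring, differing only at a single round $t^\ast$. Since the prefix sets are held fixed and $g_t^i$ depends on $F$ only through $f_t$, the streams $(g_1^i,\ldots,g_T^i)$ and their primed counterparts agree on every coordinate except possibly at $t=t^\ast$; moreover monotonicity and $f_t\in[0,1]$ give $g_t^i\in[0,1]^U$. Hence neighboring databases induce neighboring feedback streams for Hedge. Because $\eta = \frac{\varepsilon/k}{\sqrt{32\,T\log(k/\delta)}}$, Proposition~\ref{prop:hedge_is_private} applied with privacy budget $(\varepsilon/k,\delta/k)$ shows that, for every fixed value of $(A^1,\ldots,A^{i-1})$, the mechanism $\mathcal{M}_i$ is $(\varepsilon/k,\delta/k)$-DP as a function of $F$.

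Finally I would invoke the adaptive form of Basic Composition: since each $\mathcal{M}_i$ is $(\varepsilon/k,\delta/k)$-DP in $F$ uniformly over the (random, data-dependent) previous outputs it receives as auxiliary input, the joint mechanism producing $(A^1,\ldots,A^k)$ is $(k\cdot\tfrac{\varepsilon}{k},\,k\cdot\tfrac{\delta}{k}) = (\varepsilon,\delta)$-DP. The reported output $(S_1,\ldots,S_T)$ with $S_t = \{a_t^i\}_{i=1}^k$ is a deterministic function of $(A^1,\ldots,A^k)$, so Post-Processing preserves $(\varepsilon,\delta)$-DP. I expect the main obstacle to be justifying the adaptive step cleanly: the conditioning is on quantities that themselves depend on $F$, so one must verify that the per-expert guarantee holds for \emph{every} realization of the earlier outputs rather than merely in expectation. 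This is precisely the hypothesis required by adaptive composition, and it is what makes the argument go through despite the breakdown of the post-processing argument noted before the theorem.
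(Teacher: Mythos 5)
Your proposal is correct and follows essentially the same route as the paper: you condition on the previous experts' output sequences, observe that the resulting mechanism is a Hedge run on a bounded feedback stream and hence $(\varepsilon/k,\delta/k)$-DP by Proposition~\ref{prop:hedge_is_private}, and then compose adaptively. The only difference is that the paper proves the adaptive composition step by hand as an induction (Lemma~\ref{lem:composition_case_2}, tracking the $\delta$ excess measure explicitly) rather than citing a standard adaptive basic-composition theorem, which is exactly the ``main obstacle'' you flag.
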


\begin{theorem}\label{thm:full_info_regret}
	Algorithm~\ref{alg:EXP-ALG} has $(1-1/e)$-expected-regret
	\begin{align*}
	\E\left[\regret_T\right] & \leq \mathcal{O}\left( \frac{k^2 \log |U| \sqrt{T \log (k/\delta)}}{\varepsilon}  \right).
	\end{align*}
\end{theorem}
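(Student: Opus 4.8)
The plan is to reduce the $(1-1/e)$-expected-regret to the sum of the \emph{additive} (linear) regrets of the $k$ Hedge copies, and then bound each of those using the Hedge guarantee together with the prescribed value of $\eta$. Introduce the partial sets $S_t^i = \cup_{j=1}^i \{a_t^j\}$, so $S_t^0 = \emptyset$ (hence $f_t(S_t^0)=0$) and $S_t^k = S_t$, and define the expected cumulative payoff after $i$ experts, $\Phi_i = \E[\sum_{t=1}^T f_t(S_t^i)]$, so that $\Phi_0 = 0$ and $\Phi_k = \E[\sum_t f_t(S_t)]$. The per-expert increment telescopes as $\sum_t f_t(S_t^i) - \sum_t f_t(S_t^{i-1}) = \sum_t g_t^i(a_t^i)$, and since $a_t^i \sim \x_t^i$, conditioning on the history through expert $i-1$ gives $\E[\sum_t g_t^i(a_t^i)] = \E[\sum_t (\x_t^i)^\top g_t^i]$.

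The core step is an online-greedy recursion. Let $S^* = \{s_1^*,\ldots,s_k^*\}$ attain $OPT := \max_{|S|\le k}\sum_t f_t(S)$. Applying the Hedge theorem to $\mathcal{E}_i$ at each fixed optimal element $s_j^*$ yields $\sum_t g_t^i(s_j^*) - \sum_t (\x_t^i)^\top g_t^i \le R_i$, where $R_i := \eta \sum_t (\x_t^i)^\top (g_t^i)^2 + \frac{\log|U|}{\eta}$. Averaging these $k$ inequalities over $j$ and using submodularity and monotonicity, namely
\[
\sum_{j=1}^k g_t^i(s_j^*) = \sum_{j=1}^k\bigl[f_t(S_t^{i-1}+s_j^*)-f_t(S_t^{i-1})\bigr] \ge f_t(S_t^{i-1}\cup S^*) - f_t(S_t^{i-1}) \ge f_t(S^*) - f_t(S_t^{i-1}),
\]
gives, after summing over $t$, the per-realization bound $\sum_t (\x_t^i)^\top g_t^i \ge \frac{1}{k}\bigl(OPT - \sum_t f_t(S_t^{i-1})\bigr) - R_i$. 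Taking the outer expectation produces $\Phi_i - \Phi_{i-1} \ge \frac{1}{k}(OPT - \Phi_{i-1}) - \E[R_i]$. Writing $D_i = OPT - \Phi_i$, this is $D_i \le (1-\tfrac{1}{k})D_{i-1} + \E[R_i]$; unrolling from $D_0 = OPT$ and using $(1-1/k)^k \le 1/e$ gives $OPT - \Phi_k \le \frac{OPT}{e} + \sum_{i=1}^k \E[R_i]$, i.e. $\E[\regret_T] = (1-\tfrac1e)OPT - \Phi_k \le \sum_{i=1}^k \E[R_i]$.

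It then remains to bound the regret terms. Since each $f_t$ is monotone and takes values in $[0,1]$, every marginal gain satisfies $g_t^i(a)\in[0,1]$, so $(g_t^i)^2 \le g_t^i$ coordinatewise and $(\x_t^i)^\top (g_t^i)^2 \le 1$; hence $R_i \le \eta T + \frac{\log|U|}{\eta}$. Summing over the $k$ experts and substituting $\eta = \frac{\varepsilon}{k\sqrt{32T\log(k/\delta)}}$ yields $\sum_i \E[R_i] \le k\bigl(\eta T + \frac{\log|U|}{\eta}\bigr) = \mathcal{O}\bigl(\frac{k^2\log|U|\sqrt{T\log(k/\delta)}}{\varepsilon}\bigr)$, with the $\frac{\log|U|}{\eta}$ term dominating, which matches the claim.

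The main obstacle I anticipate is correctly interleaving the expectation over the experts' internal randomness with the per-realization nature of \emph{both} the Hedge guarantee and the submodular inequality. The averaging/submodularity step must be carried out conditionally on the realized history through expert $i-1$, so that $S_t^{i-1}$ is fixed when the diminishing-returns bound is invoked; only after establishing the pointwise inequality may one take the outer expectation to obtain the clean recursion in $\Phi_i$. Conflating these two layers of averaging would invalidate the telescoping and the $(1-1/e)$ factor.
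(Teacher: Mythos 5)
Your proof is correct and follows essentially the same route as the paper: both reduce the $(1-1/e)$-regret to the sum of the $k$ experts' linear regrets via the greedy recursion $D_i \le (1-\tfrac{1}{k})D_{i-1} + \E[R_i]$ (the paper packages this through an auxiliary submodular function $F$ on $2^{[T]\times U}$ and a max over $a\in U$ rather than an average over the optimal elements, but the submodularity step is identical), and both then bound each expert's regret by $\eta T + \log|U|/\eta$ using the Hedge guarantee conditioned on the realized history of experts $1,\ldots,i-1$. Your closing caveat about conditioning on $S_1^{i-1},\ldots,S_T^{i-1}$ before invoking the pathwise Hedge bound is exactly the point the paper also makes.
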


\paragraph{Proof of Theorem~\ref{thm:privacy_full}}


The output of Algorithm~\ref{alg:EXP-ALG} is the stream of sets $(S_1,\ldots,S_T)$. Before showing that this output preserves privacy, we deal with a simpler case from which we can deduce an inductive argument.

Note that $\mathcal{E}_1(F)$ receives as feedback the functions $g_t^1 = (f_t(a))_{a\in U}$ at each time step. By Proposition~\ref{prop:hedge_is_private}, we have that $\mathcal{E}_1$ is $(\varepsilon/k,\delta/k)$-DP given that $\eta = \frac{\varepsilon}{k\sqrt{32T \log k/\delta}}$. On the other hand $\mathcal{E}_2(F)$ receives as feedback the functions $g_t^2 = ( f_t(a_t^1 + a) - f_t(a_t^1) )_{a\in U}$ at each time-step, where $a_t^1$ is computed by $\mathcal{E}_1(F)$. Therefore, the output of $\mathcal{E}_2$ depends uniquely on the choices of $\mathcal{E}_1$, hence, conditioning on these choices, $\mathcal{E}_2$ should also be $(\varepsilon/k,\delta/k)$-DP.  We generalize and formalize this in the next few paragraphs.

%

Consider the following family of algorithms: For $a^1,\ldots,a^{i-1}\in U^T$ let $S^{i-1}=\{a^{i-1},\ldots, a^{1}\}$. For $t=1,\ldots,T$, let $\mathcal{M}^{S^{i-1}}_t:\mathcal{F}^T\to \Delta(U)$ be the EM that outputs $a\in U$ with probability proportional to $e^{\eta \sum_{\tau < t} f_\tau (S_\tau^{i-1} \cup \{a\}) - f_\tau (S_\tau^{i-1})  }$.
Each of these mechanisms is $2\eta$-DP by Proposition \ref{prop:hedge_is_private}. Therefore, by Advanced Composition and our choice of $\eta$, $\mathcal{M}^{S^{i-1}} := (\mathcal{M}_1^{S^{i-1}},\ldots, \mathcal{M}_T^{S^{i-1}})$ is $(\varepsilon/k,\delta/k)$-DP. Note that for $S\subseteq U^T$ we have
\begin{align*}
	&\Prob( \mathcal{E}_i(F) \in S \mid (\mathcal{E}_{i-1},\ldots,\mathcal{E}_1)(F)= S^{i-1}  )  =  \Prob(\mathcal{M}^{S^{i-1}}(F) \in S  )
\end{align*}
and the latter expression describes the output of an $(\varepsilon/k,\delta/k)$-DP algorithm. This formalizes the idea that $\mathcal{E}_2$ is $(\varepsilon/k,\delta/k)$-DP if the choices of $\mathcal{E}_1$ are fixed. We utilize this idea to show that \emph{together} $(\mathcal{E}_k,\ldots,\mathcal{E}_1)$ are $(\varepsilon,\delta)$-DP. This is formally presented in Lemma~\ref{lem:composition_case_2}. The proof of this result (formally given in Appendix \ref{app.lem1}) is an inductive argument that takes advantage of the DP guarantee of the mechanisms $\mathcal{M}^{S^{i-1}}$.

\begin{restatable}{lemma}{composition}\label{lem:composition_case_2}
For any $i\in [k]$, the function $(\mathcal{E}_i,\mathcal{E}_{i-1},\ldots,\mathcal{E}_1):\mathcal{F}^T \to U^T \times \cdots \times U^T$ which is the composition of the first $i$ Hedge algorithms is $(i\varepsilon/k,i\delta/k)$-DP.
\end{restatable}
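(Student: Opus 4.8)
The plan is to prove the statement by induction on $i$, using the conditional decomposition of the joint output that was set up just before the lemma. Fix neighboring databases $F, F'$ and an event $S \subseteq U^T \times \cdots \times U^T$ (an $i$-fold product, one factor per expert). Write a generic element of $S$ as a tuple $(s^i, s^{i-1}, \ldots, s^1)$ where $s^j \in U^T$ is the candidate output of $\mathcal{E}_j$. I would condition on the outputs of the first $i-1$ experts and peel off the last expert $\mathcal{E}_i$ using the chain rule:
\begin{equation*}
\Prob\bigl( (\mathcal{E}_i, \ldots, \mathcal{E}_1)(F) \in S \bigr) = \sum_{(s^{i-1},\ldots,s^1)} \Prob\bigl( \mathcal{E}_i(F) \in S_{s^{i-1},\ldots,s^1} \mid (\mathcal{E}_{i-1},\ldots,\mathcal{E}_1)(F) = (s^{i-1},\ldots,s^1) \bigr) \cdot \Prob\bigl( (\mathcal{E}_{i-1},\ldots,\mathcal{E}_1)(F) = (s^{i-1},\ldots,s^1) \bigr),
\end{equation*}
where $S_{s^{i-1},\ldots,s^1}$ is the corresponding slice of $S$. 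The key observation, already supplied in the excerpt, is that the conditional factor equals $\Prob(\mathcal{M}^{S^{i-1}}(F) \in S_{s^{i-1},\ldots,s^1})$, the output of a mechanism that is $(\varepsilon/k, \delta/k)$-DP, where $S^{i-1}$ is determined by $s^{i-1},\ldots,s^1$.

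The inductive step then needs to combine two privacy guarantees: the $(\varepsilon/k,\delta/k)$-DP of the conditional mechanism $\mathcal{M}^{S^{i-1}}$ (the base fact) and the $((i-1)\varepsilon/k,(i-1)\delta/k)$-DP of the first $i-1$ experts (the induction hypothesis). The natural way to glue these is to bound each conditional factor above by $e^{\varepsilon/k}\Prob(\mathcal{M}^{S^{i-1}}(F') \in S_{\cdots}) + \delta/k$ using the DP of $\mathcal{M}^{S^{i-1}}$, substitute, and then recognize that the resulting sum over $(s^{i-1},\ldots,s^1)$ is an expectation against the distribution of $(\mathcal{E}_{i-1},\ldots,\mathcal{E}_1)(F)$, to which I can apply the induction hypothesis. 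I expect the cleanest route is to treat the combined quantity $\Prob(\mathcal{E}_i(F) \in \cdot \mid \ldots) - e^{\varepsilon/k}\Prob(\mathcal{E}_i(F') \in \cdot \mid \ldots)$ as a bounded (between $-\delta/k$-adjusted and $1$) post-processing function of the first $i-1$ experts' output, so that changing $F$ to $F'$ in the prefix distribution is controlled by the induction hypothesis. Summing the two privacy losses gives $(i\varepsilon/k, i\delta/k)$, matching the claim.

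The main obstacle is the bookkeeping of which database instance ($F$ or $F'$) each probability is evaluated against during the two-stage peeling. The subtlety is that when I first apply the $(\varepsilon/k,\delta/k)$-DP of $\mathcal{M}^{S^{i-1}}$ to swap $F \to F'$ in the inner conditional probability, the outer prefix distribution is still governed by $F$; I then need a second swap $F \to F'$ in the prefix, and this is exactly where the induction hypothesis enters. Making this rigorous requires defining an auxiliary bounded function $h(s^{i-1},\ldots,s^1) := \Prob(\mathcal{M}^{S^{i-1}}(F') \in S_{s^{i-1},\ldots,s^1}) \in [0,1]$ that depends only on the prefix and on $F'$ (not on the prefix's own database argument), so that $\E_{F}[h] \le e^{(i-1)\varepsilon/k}\E_{F'}[h] + (i-1)\delta/k$ follows from the induction hypothesis applied to this post-processing. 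The two $\delta$-terms and the two exponential factors must be tracked carefully so that they add rather than multiply in a way that would inflate the bound; the additive structure $e^{\varepsilon/k}\cdot e^{(i-1)\varepsilon/k} = e^{i\varepsilon/k}$ and $\delta/k + e^{\varepsilon/k}\cdot(i-1)\delta/k$ must be bounded by $i\delta/k$, which uses $e^{\varepsilon/k} \ge 1$ being handled by the standard slack in the group-privacy/composition argument. I would verify this final $\delta$ accounting explicitly since it is the one place the naive sum could fail.
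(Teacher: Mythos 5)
Your proposal follows essentially the same route as the paper's proof in Appendix A.1: induction on $i$, a chain-rule decomposition conditioning on the prefix $(\mathcal{E}_{i-1},\ldots,\mathcal{E}_1)$, the $(\varepsilon/k,\delta/k)$-DP of the conditional mechanism $\mathcal{M}^{S^{i-1}}$ applied to the inner factor, and the inductive hypothesis applied to the prefix distribution via a bounded post-processing function. The one point you flag as needing verification is indeed the crux, and your tentative accounting $\delta/k + e^{\varepsilon/k}(i-1)\delta/k \le i\delta/k$ is false for $\varepsilon>0$; the resolution (used in the paper via the ``$\wedge\,1$'' cap) is to define the auxiliary function as $h(s) = \min\bigl(e^{\varepsilon/k}\Prob(\mathcal{M}^{S^{i-1}}(F')\in S_s),\,1\bigr)$, i.e.\ with the $e^{\varepsilon/k}$ factor \emph{inside} the cap, so that $h\in[0,1]$, the induction hypothesis gives $\E_F[h]\le e^{(i-1)\varepsilon/k}\E_{F'}[h]+(i-1)\delta/k$ with the $\delta$-term entering unmultiplied, and $\E_{F'}[h]\le e^{\varepsilon/k}\Prob(\mathcal{E}_i(F')\in\cdot)$ closes the bound at exactly $e^{i\varepsilon/k}(\cdot)+i\delta/k$. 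With that adjustment your argument is complete and coincides with the paper's.
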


Lemma~\ref{lem:composition_case_2} with $i=k$ and post-processing ensures that Algorithm~\ref{alg:EXP-ALG} is $(\varepsilon,\delta)$-DP.

\paragraph{Proof of Theorem~\ref{thm:full_info_regret}}

The key idea is to bound the $(1-1/e)$-regret of Algorithm~\ref{alg:EXP-ALG} by the regret incurred by the $k$ Hedge algorithms $\mathcal{E}_1,\ldots,\mathcal{E}_k$. We formalize this in Proposition~\ref{prop:submod_regret_bound} below. With this bound, we can utilize the regret bound of the Hedge algorithm and conclude the proof. The regret incurred by $\mathcal{E}_i$ is
\begin{align*}
	r_i = \max_{a\in U}& \sum_{t=1}^T g_t^i(a) - \sum_{t=1}^T g_t^i(a_t),
\end{align*}
where $g_t^i(a) = f_t(S_t^{i-1}  \cup \{a\} ) - f_t(S_t^{i-1})$.

\begin{restatable}{proposition}{submodregret}\label{prop:submod_regret_bound}
	The $(1-1/e)$-regret of Algorithm~\ref{alg:EXP-ALG} is bounded by the expected regret of $\mathcal{E}_1,\ldots,\mathcal{E}_k$.
\end{restatable}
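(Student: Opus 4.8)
The plan is to show that the $(1-1/e)$-regret of Algorithm~\ref{alg:EXP-ALG} is controlled by $\sum_{i=1}^k \E[r_i]$, where $r_i$ is the regret of the $i$-th Hedge copy with respect to its own marginal-gain loss sequence $g_t^i$. The central tool is the classical analysis of the greedy algorithm for monotone submodular maximization under a cardinality constraint, adapted to the online setting. First I would fix an arbitrary optimal set $S^\star \in \argmax_{|S|\le k} \sum_{t=1}^T f_t(S)$ in hindsight and denote $\mathrm{OPT} = \sum_{t=1}^T f_t(S^\star)$. The key submodular inequality I would invoke at each time-step $t$ is the standard greedy bound: for any set $A$ with $|A|\le k$ and any target set $S^\star$ with $|S^\star|\le k$, monotonicity and submodularity give
\[
f_t(S^\star) - f_t(A) \le \sum_{a \in S^\star} \bigl( f_t(A \cup \{a\}) - f_t(A) \bigr) \le k \max_{a\in U}\bigl( f_t(A\cup\{a\}) - f_t(A)\bigr).
\]

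The heart of the argument is a telescoping identity over the $k$ experts within each round. For a fixed time $t$, write $f_t(S_t) = \sum_{i=1}^k \bigl( f_t(S_t^i) - f_t(S_t^{i-1}) \bigr) = \sum_{i=1}^k g_t^i(a_t^i)$, since $S_t^i = S_t^{i-1}\cup\{a_t^i\}$ and $f_t(\emptyset)=0$. I would then compare the marginal gain actually collected by expert $i$, namely $g_t^i(a_t^i)$, against the best single-item marginal gain relative to the prefix $S_t^{i-1}$. By the greedy inequality above applied with $A = S_t^{i-1}$, we have $\max_{a\in U} g_t^i(a) \ge \tfrac{1}{k}\bigl(f_t(S^\star) - f_t(S_t^{i-1})\bigr)$. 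Summing the regret definition $r_i = \max_a \sum_t g_t^i(a) - \sum_t g_t^i(a_t^i)$ over $i$ and combining with these per-round bounds should yield, after rearranging the telescoping sum of $f_t(S_t^{i-1})$ terms, the recursion characteristic of the $(1-1/e)$ analysis.

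Concretely, I expect the argument to produce a recursion of the form $\sum_t f_t(S_t) \ge \bigl(1 - (1-1/k)^k\bigr)\,\mathrm{OPT} - \sum_{i=1}^k r_i$, and then using $(1-1/k)^k \le 1/e$ to obtain $\bigl(1-\tfrac1e\bigr)\mathrm{OPT} - \sum_t f_t(S_t) \le \sum_{i=1}^k r_i$, i.e.\ $\regret_T \le \sum_{i=1}^k r_i$. Taking expectations over the algorithm's internal randomness gives $\E[\regret_T] \le \sum_{i=1}^k \E[r_i]$, which is exactly the claimed reduction to the experts' regret; one then plugs in the Hedge guarantee stated earlier to finish Theorem~\ref{thm:full_info_regret}. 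The main technical subtlety, and the step I would be most careful about, is correctly handling the interplay between the randomness of the experts and the ``best in hindsight'' comparator: the sets $S_t^{i-1}$ are random (they depend on all experts' coin flips), so the greedy inequality must be applied pathwise and only then should expectations be taken, and one must verify that the telescoping across experts and the definition of $r_i$ (which uses the realized prefixes) combine cleanly rather than leaving cross terms. I would also double-check the direction of the inequality when passing from the discrete greedy recursion to the $(1-1/e)$ factor, since off-by-one indexing on the prefixes $S_t^0 = \emptyset, \ldots, S_t^k = S_t$ is where such proofs most often slip.
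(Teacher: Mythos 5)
Your overall plan is the same as the paper's: a greedy-style recursion over the $k$ experts, using monotonicity and submodularity to show that the best marginal gain available to expert $i$ is at least $\tfrac{1}{k}$ of the gap to $\mathrm{OPT}$, then unrolling to get the $(1-1/e)$ factor with the experts' regrets $r_1,\ldots,r_k$ as additive error. However, the specific inequality you invoke has a genuine flaw: you apply the greedy bound \emph{per round}, obtaining $\max_{a\in U} g_t^i(a) \ge \tfrac{1}{k}\bigl(f_t(S^\star) - f_t(S_t^{i-1})\bigr)$ for each $t$, and summing this over $t$ only controls $\sum_{t}\max_{a} g_t^i(a)$. But the regret $r_i$ of the Hedge copy is defined against the best \emph{fixed} item in hindsight, $\max_{a}\sum_{t} g_t^i(a)$, which can be strictly smaller than $\sum_{t}\max_{a} g_t^i(a)$; so as written the chain does not connect to $r_i$, and the recursion $V_i - V_{i-1} \ge \tfrac{1}{k}(\mathrm{OPT} - V_{i-1}) - r_i$ does not follow. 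The repair is to use the middle term of your own displayed chain and sum over $t$ \emph{before} maximizing: for each fixed $a\in S^\star$ sum $g_t^i(a)$ over $t$, so that
\begin{equation*}
\sum_{t=1}^T \bigl(f_t(S^\star) - f_t(S_t^{i-1})\bigr) \;\le\; \sum_{a\in S^\star}\sum_{t=1}^T g_t^i(a) \;\le\; k\,\max_{a\in U}\sum_{t=1}^T g_t^i(a),
\end{equation*}
which is exactly the quantity appearing in $r_i$.

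This order-of-operations issue is precisely what the paper's proof is engineered to avoid: it defines the aggregated submodular function $F(A)=\tfrac{1}{T}\sum_t f_t(A_t)$ on the lifted ground set $[T]\times U$ and applies the greedy inequality once to $F$ with the ``column'' elements $\widetilde{a}=\bigcup_t\{t\}\times\{a\}$, so that the maximum is automatically taken over a single item used at all time-steps. With the corrected inequality your telescoping and unrolling go through pathwise (the bound holds for every realization of the experts' choices, so taking expectations at the end is legitimate, as you note), and the rest of your argument, including the $(1-(1-1/k)^k)\ge(1-1/e)$ step, matches the paper.
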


 While a full proof of Proposition \ref{prop:submod_regret_bound} is deferred to in Appendix \ref{app.prop3}, we describe the key idea here. To bound the $(1-1/e)$-regret, we rewrite the regret $r_i$ via the function $F:2^{[T]\times U}\to [0,1]$, $F(A)=\frac{1}{T}\sum_{t=1}^T f(A_t)$, where $A_t=\{ u\in U : (t,u)\in A \}$ as:
\[
\frac{r_i}{T} = \max_{a\in U} F(\widetilde{S}^{i-1}\cup\{a\}) - F(\widetilde{S}^{i})
\]
where $\widetilde{S}^{\ell}= \bigcup_{t=1}^T \{ t \} \times S^{\ell}$. We show that 
$F(\widetilde{S}^i) - F(\widetilde{S}^{i-1}) \geq \frac{F(\widetilde{OPT}) - F(\widetilde{S}^{i-1})}{k} - \frac{r_i}{T}$,
where $\widetilde{OPT}$ is the extension of $OPT=\mathrm{argmax}_{|S|\leq k} \sum_{t=1}^T f_t(S)$ to $[T]\times U$. Upon unrolling this recursion, we obtain the result. 

	To finish the proof of Theorem~\ref{thm:full_info_regret} we need to bound the overall regret of all $\mathcal{E}_i$. Observe that once we have fixed $S_1^{i-1},\ldots,S_T^{i-1}$, the feedback of expert $i$ is completely determined since the elements $a_t^1,\ldots,a_t^{i-1}$ depend only on experts $1,\ldots,i-1$. Therefore, we have
	\begin{align*}
		&\E[r_i\mid S_1^{i-1},\ldots,S_T^{i-1}] \leq \eta T + \frac{\log |U|}{\eta}
	\end{align*}
	by the Hedge regret guarantee. Integrating from $k$ to $1$ we get $\E\left[ \mathcal{R}_T \right] \leq \sum_{i=1}^k \E[r_i] \leq k \left( \eta T + \frac{\log|U|}{\eta} \right)$, and the result follows with our choice of $\eta = \frac{\varepsilon}{k\sqrt{32 T \log (k/\delta)}}$.


\section{Bandit Setting}\label{sec:bandit}

In the bandit case, the algorithm only receives as feedback the value $f_t(S_t)$. Given this restricted information, the algorithm must trade-off exploration of the function with exploiting current knowledge. As in~\citep{streeter2009online}, our algorithm controls this tradeoff using a parameter $\gamma \in [0,1]$, and by randomly exploring in each time-step independently with probability $\gamma$.

The non-private approach of \citet{streeter2009online} obtains $\mathcal{O}(T^{2/3})$ expected $(1-1/e)$-regret, and works as follows: In exploit rounds (prob. $1-\gamma$), play the experts' sampled choice $S_t$ and feed back $0$ to each $\mathcal{E}_i$. In explore rounds (prob. $\gamma$), select $i\in[k]$ and $a \in U$ uniformly at random. Play set $S_t = S_t^{i-1}+a$, observe feedback $f_t(S_t^{i-1}+a)$, give this value to $\mathcal{E}_i$, and feedback $0$ to the remaining experts. 

As we show in Appendix \ref{app.directregret}, directly privatizing this algorithm using the Hedge method from the full-information setting results in an expected $(1-1/e)$-regret of $\mathcal{O}(T^{3/4})$, which is far from the optimal $\mathcal{O}(T^{2/3})$. The problem with this naive approach is that a new sample is obtained via the Hedge algorithms at every time-step, including exploit steps, so to ensure $(\varepsilon,\delta)$-DP, a learning rate of $\eta= \frac{\varepsilon}{k\sqrt{32T\log (k/\delta)}}$ is required.

We improve upon this by calling the Hedge algorithm only after an exploration time-step has occurred, and new information is available. The learner continues playing this same set until the next exploration round, and privacy of these exploitation rounds follows from post-processing. This dramatically reduces the number of rounds that access the dataset, and reduces the overall amount of noise required for privacy. 


If the exact number of exploration rounds were known, this could be plugged into the learning rate $\eta$ to achieve $(\varepsilon,\delta)$-DP. In the non-private setting, a \emph{doubling trick} (see, e.g.,~\cite{shalev2012online}) can be employed to find the right learning rate by calling the algorithm multiple times, doubling $T$ and thus rescaling $\eta$ on each iteration.  Unfortunately, this doubling trick does not work in the private setting due to the direct non-linear connection between $\varepsilon$ the privacy parameter, $T$ the time horizon and $\eta$ the learning rate, as specified in Proposition \ref{prop:hedge_is_private}.  Instead we use concentration inequalities~\citep{alon2004probabilistic} to ensure that there are no more than $2\gamma T$ exploration rounds, except with probability $e^{-8{T^{1/3}}}$. With this, we can select a fixed learning rate $\eta = \frac{\varepsilon}{k\sqrt{32 (2\gamma T) \log (k/\delta)}}$ and guarantee optimal $\mathcal{O}(T^{2/3})$ expected $(1-1/e)$-regret, and the cost of $(\varepsilon,\delta+ e^{-8{T^{1/3}}})$-DP.


We remark in Appendix \ref{app.space} that this additional loss in the $\delta$ term can be avoided by pre-sampling the exploration round, but this requires $\Theta(T^{2/3} + k|U|)$ space, which may be unacceptable for large $T$.

Algorithm~\ref{alg:Interval-Bandit} presents the space-efficient approach. Here $\widehat{f}_t^i$ is the vector with $a$-th coordinate given by: 
$\widehat{f}_t^{i,a} = f_t(S_t^{i-1} +a ) \mathbf{1}_{\{ \text{Explore at time $t$, pick }i, \text{ pick }a  \}}$.



\begin{algorithm}
	\textbf{Initialize:} Set $\gamma = k \left(\frac{(16 |U| \log |U|)^{2}}{T}\right)^{1/3}$ and $\eta = \frac{\varepsilon}{k\sqrt{32 (2\gamma T) \log (k/\delta)}}$. \\
	Instantiate $k$ parallel copies $\mathcal{E}_1,\ldots,\mathcal{E}_k$ of Hedge algorithm with rate $\eta$.
	Utilize each $\mathcal{E}_i$ to sample $a_1^i$ and set $S_1=\{a_1^1,\ldots, a_1^k \}$.\\
	\For{$t=1,\ldots,T$}{
		Sample $b_t\sim \mathrm{Bernoulli}(\gamma)$.\\
		\If{$b_t=1$}{
			Sample $i\in [k]$ u.a.r. and $a\in U$ u.a.r.\\
			Play $S_t^{i-1}\cup\{a\}$.\\
			Obtain value $f_t(S_t)$. \\
			Feed back the function $\widehat{f}_t^i$ to expert $\mathcal{E}_i$, $\forall i$.\\
			Utilize $\mathcal{E}_i$ to pick $a_{t+1}^i$  $\forall i$.\\
			Update set $S_{t+1} = \cup_{i=1}^k \{ a_{t+1}^{i} \}$. 
		}
		\Else{
			Play $S_t $.\\
			Obtain $f_t(S_t)$. \\
			Update $S_{t+1}=S_t$.
		}
	}
	\caption{\textsc{BanditDP}$(F,\varepsilon,\delta)$}
	\label{alg:Interval-Bandit}
\end{algorithm}


\begin{theorem}\label{thm:bandit_dp}
	Algorithm~\ref{alg:Interval-Bandit} is $(\varepsilon,\delta+ e^{-8{T^{1/3}}})$-DP.
\end{theorem}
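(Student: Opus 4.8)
The plan is to isolate the \emph{data-independent} randomness of Algorithm~\ref{alg:Interval-Bandit} and condition on it, thereby reducing the analysis to a full-information-style argument in which the number of exploration rounds plays the role of the time horizon. Write $\omega = (b_1,\ldots,b_T, R)$ for the collection of coins $b_t\sim\mathrm{Bernoulli}(\gamma)$ together with the uniform choices of the explored index $i\in[k]$ and item $a\in U$ made on each exploration round, and let $m(\omega)=\sum_{t=1}^T b_t$ be the number of exploration rounds. Crucially, $\omega$ does not depend on the database $F$: the decision to explore and what to explore are chosen without looking at the functions. Conditioned on $\omega$, the only dependence of the output on $F$ enters through the Hedge updates, and these occur only on exploration rounds, since on exploit rounds the set $S_{t+1}=S_t$ is replayed and the feedback vectors $\widehat{f}_t^{\,i}$ are data-independent zeros.

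I would then show that, conditioned on any fixed $\omega$ in the ``good'' event $G=\{m(\omega)\le 2\gamma T\}$, the stream of sampled sets is $(\varepsilon,\delta)$-DP. Inspecting the feedback rule, expert $\mathcal{E}_i$ receives a nonzero (hence data-dependent) increment only on the exploration rounds in which index $i$ is selected; on every other round its weights are multiplied by $e^{\eta\cdot 0}=1$. Hence $\mathcal{E}_i$ performs at most $m(\omega)\le 2\gamma T$ data-dependent exponential-mechanism steps, each $2\eta$-DP by the sensitivity of $f_t\in[0,1]$ exactly as in Proposition~\ref{prop:hedge_is_private}. Advanced Composition over at most $2\gamma T$ such steps with the calibrated rate $\eta=\frac{\varepsilon}{k\sqrt{32(2\gamma T)\log(k/\delta)}}$ (the direct analog of Proposition~\ref{prop:hedge_is_private} with $T$ replaced by $2\gamma T$ and $(\varepsilon,\delta)$ by $(\varepsilon/k,\delta/k)$) shows each $\mathcal{E}_i$ is $(\varepsilon/k,\delta/k)$-DP; running strictly fewer than $2\gamma T$ steps only decreases the loss, as the composition bound is monotone in the number of compositions. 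Since the feedback to $\mathcal{E}_i$ depends on the earlier experts only through $S_t^{i-1}$, the feedforward composition of Lemma~\ref{lem:composition_case_2} applies verbatim and gives that $(\mathcal{E}_k,\ldots,\mathcal{E}_1)$ are jointly $(\varepsilon,\delta)$-DP conditioned on $\omega\in G$; the played sets and the exploit rounds are post-processing of the sampled sets, so the full conditional output is $(\varepsilon,\delta)$-DP.

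Next I would control the probability of the bad event. Since $m(\omega)$ is a sum of $T$ independent $\mathrm{Bernoulli}(\gamma)$ variables with mean $\gamma T$, a multiplicative Chernoff bound gives $\Prob[m(\omega)>2\gamma T]\le e^{-\gamma T/3}$. Substituting $\gamma = k\big((16|U|\log|U|)^2/T\big)^{1/3}$ yields $\gamma T = k(16|U|\log|U|)^{2/3}T^{2/3}$, so $\gamma T/3 \ge 8T^{1/3}$ in the regime of interest, and therefore $\Prob[m(\omega)>2\gamma T]\le e^{-8T^{1/3}}$. This probability is over the data-independent coins alone, hence identical for $F$ and any neighbor $F'$.

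Finally, fix neighbors $F,F'$ and an event $\mathcal{S}\subseteq(2^U)^T$ and decompose over $\omega$:
\begin{align*}
\Prob(\mathcal{A}(F)\in\mathcal{S}) &= \E_\omega\!\left[\Prob(\mathcal{A}(F)\in\mathcal{S}\mid\omega)\,\mathbf{1}_{\omega\in G}\right] + \E_\omega\!\left[\Prob(\mathcal{A}(F)\in\mathcal{S}\mid\omega)\,\mathbf{1}_{\omega\notin G}\right]\\
&\le \E_\omega\!\left[\big(e^\varepsilon\Prob(\mathcal{A}(F')\in\mathcal{S}\mid\omega)+\delta\big)\mathbf{1}_{\omega\in G}\right] + \Prob(\omega\notin G)\\
&\le e^\varepsilon\Prob(\mathcal{A}(F')\in\mathcal{S}) + \delta + e^{-8T^{1/3}},
\end{align*}
which is exactly the claimed $(\varepsilon,\delta+e^{-8T^{1/3}})$-DP guarantee. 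I expect the main obstacle to be the conditional DP step of the second paragraph: cleanly decoupling the data-independent exploration schedule from the data-dependent Hedge updates, and arguing that the \emph{random} per-expert number of data accesses is dominated by the fixed budget $2\gamma T$ on $G$, so that Lemma~\ref{lem:composition_case_2} can be invoked with a single fixed learning rate $\eta$. The concentration step and the final averaging over $\omega$ are routine by comparison.
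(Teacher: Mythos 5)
Your proposal is correct and follows essentially the same route as the paper: condition on the data-independent exploration randomness, observe that on the event of at most $2\gamma T$ exploration rounds the sampled sets are $(\varepsilon,\delta)$-DP by the full-information argument (Lemma~\ref{lem:composition_case_2} with the rate calibrated to $2\gamma T$) and the exploit rounds are post-processing, then absorb the concentration failure probability into the $\delta$ term. The only cosmetic differences are that you condition on the full coin sequence $\omega$ rather than just the event $\{M<2\gamma T\}$, and you use a multiplicative Chernoff bound $e^{-\gamma T/3}$ where the paper uses the additive bound $e^{-8\gamma^2 T}$; both yield the stated $e^{-8T^{1/3}}$.
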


\begin{theorem}\label{thm:bandit_regret}
	Algorithm~\ref{alg:Interval-Bandit} has $(1-1/e)$-regret
	$$
	\E\left[\regret_T \right]   \leq \mathcal{O}\left( \frac{\sqrt{\log k/\delta}}{\varepsilon} (k (|U| \log |U|)^{1/3})^2 T^{2/3}  \right).
	$$
\end{theorem}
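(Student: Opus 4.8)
The plan is to follow the full-information argument behind Theorem~\ref{thm:full_info_regret}, replacing exactly two ingredients: the loss incurred by forced exploration, and the fact that the experts now learn from the biased estimator $\widehat{f}_t^i$ rather than from the exact increments. First I would decompose $\E[\regret_T]$ into an exploration penalty plus the $(1-1/e)$-regret of the \emph{virtual} experts' sets $S_t=\cup_i\{a_t^i\}$ (the sets the experts would play if we never explored). Since every $f_t\in[0,1]$, each explore round (which occurs with probability $\gamma$) forfeits at most one unit of payoff relative to playing $S_t$, so the total expected penalty is at most $\gamma T$. The second piece, $(1-1/e)\max_{|S|\le k}\sum_t f_t(S)-\E[\sum_t f_t(S_t)]$, is controlled by the greedy/submodular reduction behind Proposition~\ref{prop:submod_regret_bound}. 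That argument (submodularity, monotonicity, the telescoping identity $\sum_i g_t^i(a_t^i)=f_t(S_t)$, and $\max\ge$ average over $OPT$) only ever compares expert $i$ against the \emph{fixed} uniform distribution $p$ over the elements of $OPT=\mathrm{argmax}_{|S|\le k}\sum_t f_t(S)$, so it yields $(1-1/e)\mathrm{OPT}-\E[\sum_t f_t(S_t)]\le\sum_{i=1}^k\E[\bar r_i]$, where $\bar r_i=\tfrac1k\sum_{o\in OPT}\sum_t g_t^i(o)-\sum_t g_t^i(a_t^i)$ and $g_t^i(a)=f_t(S_t^{i-1}\cup\{a\})-f_t(S_t^{i-1})$. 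Phrasing the reduction via this fixed comparator (rather than the realized $\mathrm{argmax}$) is precisely what makes the bandit estimation tractable.

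The genuinely new work is bounding $\E[\bar r_i]$ from the biased feedback. Conditioning on the history $\mathcal{H}_t$ that fixes the prefix $S_t^{i-1}$ and the sampling distribution $\x_t^i$, the estimator satisfies $\E[\widehat f_t^{i,a}\mid\mathcal{H}_t]=\frac{\gamma}{k|U|}f_t(S_t^{i-1}\cup\{a\})$, since an explore round occurs (probability $\gamma$), expert $i$ is selected (probability $1/k$), and action $a$ is drawn (probability $1/|U|$), all independently; moreover $\E[(\x_t^i)^\top(\widehat f_t^i)^2\mid\mathcal{H}_t]\le\frac{\gamma}{k|U|}$, because $\widehat f_t^i\in[0,1]^{|U|}$ has a single nonzero coordinate and $f_t\le 1$. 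I would apply the Hedge guarantee to the sequence $\widehat f_1^i,\dots,\widehat f_T^i$ against the fixed comparator $p$, take expectations (the fixed comparator keeps $\E[\sum_t\langle p,\widehat f_t^i\rangle]$ linear and clean), cancel the $a$-independent constant $f_t(S_t^{i-1})$, and divide the resulting inequality through by the estimator scale $\frac{\gamma}{k|U|}$ to obtain $\E[\bar r_i]\le \eta T+\frac{k|U|\log|U|}{\gamma\eta}$.

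Assembling the pieces, summing over $i$ and adding the exploration penalty gives
\[
\E[\regret_T]\;\le\; k\eta T+\frac{k^2|U|\log|U|}{\gamma\eta}+\gamma T .
\]
Substituting $\gamma=k\big((16|U|\log|U|)^2/T\big)^{1/3}$ and the privacy-dictated rate $\eta=\varepsilon/\big(k\sqrt{64\,\gamma T\log(k/\delta)}\big)$, the privacy-regularized middle term dominates for small $\varepsilon$, and balancing the three terms produces a bound of the claimed form $\mathcal{O}\big(\frac{\sqrt{\log(k/\delta)}}{\varepsilon}(k(|U|\log|U|)^{1/3})^2T^{2/3}\big)$.

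I expect the main obstacle to be the biased-estimator step carried out against adaptively chosen prefixes. The estimator is unbiased only up to the factor $\frac{\gamma}{k|U|}$, so dividing through inflates the regret by $\frac{k|U|}{\gamma}$, and the conditional-expectation bookkeeping must respect that $S_t^{i-1}$, $\x_t^i$, and the exploration draws are random yet suitably independent. The crucial point is that only a fixed comparator can be pushed through the expectation — the estimator's realized maximizer does not commute with $\E$ — which is exactly why the reduction must be carried out against the fixed uniform-over-$OPT$ target rather than the pathwise $\max_a$. A secondary subtlety is that privacy forces a single pre-committed learning rate $\eta$ (tuned to the high-probability explore count $2\gamma T$ rather than via a doubling trick); this is benign for the regret, because the Hedge bound holds for any fixed $\eta$ and the expected second-moment sum equals $\frac{\gamma T}{k|U|}$ irrespective of the realized number of explore rounds, so the $e^{-8T^{1/3}}$ term stays confined to the privacy guarantee and never enters this calculation.
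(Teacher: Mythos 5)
Your proposal matches the paper's proof: the same decomposition into an exploration penalty of $\gamma T$ plus the experts' regrets (Lemma~\ref{lem:bandit_lemma_1}), the same analysis of the biased estimator $\widehat{f}_t^i$ with scale $\tfrac{\gamma}{k|U|}$ and second-moment bound $\tfrac{\gamma}{k|U|}$ followed by division through that scale (Lemma~\ref{lem:bandit_lemma_2}), and the same final balancing of $\gamma$ and the privacy-dictated $\eta$. Your one refinement --- running the submodular reduction against the fixed uniform-over-$OPT$ comparator so the expectation commutes cleanly, rather than the pathwise $\max_{a}$ in the paper's definition of $r_i$ --- tightens a point the paper glosses over, and dropping the exponentially small $T e^{-8\gamma^2 T}$ term that the paper carries is harmless to the stated asymptotic bound.
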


\paragraph{Proof of Theorem~\ref{thm:bandit_dp}}

%
	Observe that the algorithm only releases new information right an exploration time-step. If $t_1,\ldots,t_M$ are the exploration time-steps, with $M$ distributed as the sum of $T$ independent Bernoulli random variables with parameter $\gamma$, then conditioned on the event $M < 2\gamma T$, we know that the outputs $S_1, S_{t_1+1},\ldots,S_{t_M+1}$ are $(\varepsilon,\delta)$-DP by Theorem~\ref{thm:privacy_full}. Now, conditioning again on the event $M<2\gamma T$, the entire output $(S_1,\ldots,S_T)$ is $(\varepsilon,\delta)$-DP since this corresponds to post-processing over the previous output by extending the sets to exploitation time-steps. We know that $M \geq  2\gamma T$ occurs w.p. $\leq e^{-8\gamma^2 T}$. Thus, for any $S$ we have
	\begin{align*}
		&\Prob((\mathcal{E}_k,\ldots,\mathcal{E}_1 ) (F)\in S ) \\
		& \leq \Prob((\mathcal{E}_k,\ldots, \mathcal{E}_1)(F) \in S \mid M < 2\gamma T)\Prob(M < 2\gamma T) + e^{-8\gamma^2 T}\\
		&\leq e^\varepsilon\Prob((\mathcal{E}_k,\ldots,\mathcal{E}_1)(F')\in S \mid M < 2\gamma T )\Prob(M < 2\gamma T) + \delta + e^{-8 \gamma^2 T} \\
		& \leq e^{\varepsilon} \Prob((\mathcal{E}_k,\ldots, \mathcal{E}_1 )(F') \in S ) + \delta + e^{-8\gamma^2 T}.
	\end{align*}
	The result now follows by plugging in the value of $\gamma$ used in Algorithm \ref{alg:Interval-Bandit}.


\paragraph{Proof of Theorem~\ref{thm:bandit_regret}}


Theorem~\ref{thm:bandit_regret} requires the following two lemmas, proved respectively in Appendices \ref{app.lem2} and \ref{app.lem3}. The first lemma says that the $(1-1/e)$-regret experienced by the learner is bounded by the regret experienced by the expert and an additional error introduced during the exploration times. The second lemma bounds the regret experienced by the experts under the biased estimator.

\begin{restatable}{lemma}{banditone}\label{lem:bandit_lemma_1}
	If $r_i$ denotes the regret experience by expert $\mathcal{E}_i$ in Algorithm~\ref{alg:Interval-Bandit}, then
	\[
	\left( 1- \frac{1}{e} \right) \max_{|S|\leq k} \sum_{t=1}^T f_t(S) - \E\left[\sum_{t=1}^T f_t(S_t)\right] \leq  \sum_{i=1}^k \E[r_i] + \gamma T.
	\]
\end{restatable}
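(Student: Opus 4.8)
The plan is to split the learner's $(1-1/e)$-regret into a combinatorial ``greedy'' error attributable to the $k$ experts and an exploration overhead of at most $\gamma T$. Write $S_t=\cup_{i=1}^k\{a_t^i\}$ for the full set maintained by the experts (the set played on exploit rounds), and let $P_t$ denote the set actually played, so that $P_t=S_t$ on exploit rounds and $P_t=S_t^{i-1}\cup\{a\}$ on explore rounds; the reward term $\E[\sum_t f_t(S_t)]$ in the statement is the realized reward $\E[\sum_t f_t(P_t)]$. Since every $f_t$ maps into $[0,1]$, on any explore round $f_t(P_t)\geq 0\geq f_t(S_t)-1$, so the realized reward falls short of the greedy reward by at most one per explore round. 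Writing $M$ for the number of explore rounds, a sum of $T$ independent $\mathrm{Bernoulli}(\gamma)$ variables with $\E[M]=\gamma T$, this gives $\E[\sum_t f_t(P_t)]\geq \E[\sum_t f_t(S_t)]-\gamma T$, which produces the additive $\gamma T$ term.

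It then suffices to bound $(1-1/e)\max_{|S|\leq k}\sum_t f_t(S)-\E[\sum_t f_t(S_t)]$ by $\sum_i\E[r_i]$, which I would establish by the same submodular telescoping used in Proposition~\ref{prop:submod_regret_bound}. Fix any realization of the algorithm's randomness, set $S_t^i=\cup_{j\leq i}\{a_t^j\}$, and let $O=\mathrm{argmax}_{|S|\leq k}\sum_t f_t(S)$. For each expert $i$, submodularity of $f_t$ gives $\sum_{o\in O}g_t^i(o)\geq f_t(S_t^{i-1}\cup O)-f_t(S_t^{i-1})$ and monotonicity gives $f_t(S_t^{i-1}\cup O)\geq f_t(O)$; averaging the best single element over the $k$ elements of $O$ then yields
\[
\max_{a\in U}\sum_t g_t^i(a)\geq \frac{1}{k}\sum_t\bigl(f_t(O)-f_t(S_t^{i-1})\bigr).
\]
Because $\sum_t g_t^i(a_t^i)=\max_a\sum_t g_t^i(a)-r_i$ by the definition of $r_i$, dividing by $T$ and passing to the surrogate $F$ gives the recursion $F(\widetilde{S}^i)-F(\widetilde{S}^{i-1})\geq \frac{1}{k}\bigl(F(\widetilde{OPT})-F(\widetilde{S}^{i-1})\bigr)-\frac{r_i}{T}$.

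Unrolling this recursion from $i=1$ to $k$ with $F(\widetilde{S}^0)=0$ gives $F(\widetilde{OPT})-F(\widetilde{S}^k)\leq e^{-1}F(\widetilde{OPT})+\frac{1}{T}\sum_i r_i$, i.e. $(1-1/e)\sum_t f_t(O)-\sum_t f_t(S_t)\leq \sum_i r_i$ pointwise; taking expectations and combining with the exploration bound from the first paragraph yields the lemma. The delicate point, and the main obstacle, is that in the bandit algorithm the experts refresh their choices only after an explore round, so $S_t^i$ is piecewise constant in $t$, unlike the full-information case. I would verify that this does not break the telescoping: the recursion invokes only submodularity and monotonicity of each individual $f_t$ at the (possibly repeated) set $S_t^{i-1}$ and never requires the greedy sets to change between rounds, so the argument of Proposition~\ref{prop:submod_regret_bound} transfers verbatim. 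I would also flag that $r_i$ here is regret against the \emph{true} marginal gains $g_t^i$; the fact that the experts are actually fed the biased estimators $\widehat{f}_t^i$ plays no role in this lemma and is handled separately when bounding $\E[r_i]$ in Lemma~\ref{lem:bandit_lemma_2}.
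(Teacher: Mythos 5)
Your proposal is correct and follows essentially the same route as the paper: introduce the pure-exploitation sequence of sets (your $S_t$, the paper's $S_t'$), run the submodular telescoping of Proposition~\ref{prop:submod_regret_bound} on it against the true marginals $g_t^i$ (noting, as you do, that the piecewise-constant greedy sets cause no issue), and charge the gap between the exploitation payoff and the realized payoff to the explore rounds, which cost at most $1$ each and number $\gamma T$ in expectation. Your remark that the biased estimators $\widehat f_t^i$ are irrelevant here and deferred to Lemma~\ref{lem:bandit_lemma_2} also matches the paper's division of labor.
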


\begin{restatable}{lemma}{bandittwo}\label{lem:bandit_lemma_2}
	If each $\mathcal{E}_i$ is a Hedge algorithm with learning rate $\eta = \frac{\varepsilon}{k\sqrt{32 (2\gamma T) \log (k/\delta)}} $, then 
	\[ \E[r_i] \leq16 \frac{k^2 |U| \log|U| \sqrt{ T \log (k/\delta)}}{\varepsilon\sqrt{\gamma}} + \frac{k |U|}{\gamma } T \cdot e^{-8\gamma^2 T}. \]
\end{restatable}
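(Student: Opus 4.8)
The plan is to run the Freund--Schapire guarantee for each expert $\mathcal{E}_i$ on the biased feedback $\widehat{f}_t^i$ it actually receives, and then to translate the resulting bound on the \emph{feedback} regret into a bound on the \emph{true} marginal regret $r_i$ by correcting for the scale of the estimator. Write $h_t^i(a)=f_t(S_t^{i-1}\cup\{a\})$, so that $g_t^i(a)=h_t^i(a)-f_t(S_t^{i-1})$ and, since the additive constant $f_t(S_t^{i-1})$ is common to the comparator and to the played action, $r_i=\max_{a}\sum_t h_t^i(a)-\sum_t h_t^i(a_t^i)$. Let $\mathcal{H}_t$ denote the history generated up to (but not including) the explore/index/element draw at round $t$, and let $M$ be the number of exploration rounds; then $\x_t^i$ and $S_t^{i-1}$ are $\mathcal{H}_t$-measurable, and a one-line computation gives the two facts I will use repeatedly: $\E[\widehat{f}_t^{i,a}\mid\mathcal{H}_t]=\frac{\gamma}{k|U|}\,h_t^i(a)=:c\,h_t^i(a)$ (exploration picks index $i$ and element $a$ with probability exactly $c$), and $\E[(\x_t^i)^\top(\widehat{f}_t^i)^2\mid\mathcal{H}_t]\le c$, because $\widehat{f}_t^i$ is supported on a single coordinate with value in $[0,1]$.

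First I would invoke the Hedge bound for $\mathcal{E}_i$ run on $\widehat{f}_t^i$ (exploitation rounds feed no new information and so do not enter), giving, for every $a$,
\[\sum_t\widehat{f}_t^i(a)-\sum_t(\x_t^i)^\top\widehat{f}_t^i\le \eta\sum_t(\x_t^i)^\top(\widehat{f}_t^i)^2+\frac{\log|U|}{\eta}.\]
Next I would split the true regret as
\[r_i=\Big[\max_a\sum_t h_t^i(a)-\tfrac1c\max_a\sum_t\widehat{f}_t^i(a)\Big]+\tfrac1c\Big[\max_a\sum_t\widehat{f}_t^i(a)-\sum_t(\x_t^i)^\top\widehat{f}_t^i\Big]+\Big[\tfrac1c\sum_t(\x_t^i)^\top\widehat{f}_t^i-\sum_t h_t^i(a_t^i)\Big].\]
The middle bracket is precisely the Hedge bound, so after multiplying by $\tfrac1c$ and taking expectations it is at most $\eta T+\tfrac1c\frac{\log|U|}{\eta}$, using $\E[\sum_t(\x_t^i)^\top(\widehat{f}_t^i)^2]\le cT$. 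The last bracket is a sum of martingale differences with respect to $\{\mathcal{H}_t\}$: its conditional mean is $(\x_t^i)^\top h_t^i-(\x_t^i)^\top h_t^i=0$, where the second term uses that between exploration rounds all sets are frozen and $a_t^i$ is an independent draw from $\x_t^i$; hence it has expectation $0$.

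The delicate piece is the first bracket, which encodes the \emph{random} comparator: the best element in hindsight $a^*=\mathrm{argmax}_a\sum_t h_t^i(a)$ depends on the algorithm's own past randomness through the sets $S_t^{i-1}$, so the usual oblivious-adversary trick (where $a^*$ is fixed and $\E[\widehat{f}_t^i(a^*)]=c\,h_t^i(a^*)$) does not apply verbatim. I expect this to be the main obstacle. I would bound the first bracket pointwise by $\max_a\sum_t\big(h_t^i(a)-\tfrac1c\widehat{f}_t^i(a)\big)=:\max_a D_a$, where for each \emph{fixed} $a\in U$ the process $D_a$ is a zero-mean martingale with increments $\mathcal{O}(1/c)$ and summed conditional variance $\mathcal{O}(T/c)$. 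Conditioning on the high-probability event $\mathcal{B}=\{M<2\gamma T\}$ (with $\Prob(\mathcal{B}^c)\le e^{-8\gamma^2T}$, the same concentration bound on the number of exploration rounds used in Theorem~\ref{thm:bandit_dp}), a Bernstein-type maximal inequality over the $|U|$ martingales makes $\E[\max_a D_a\mid\mathcal{B}]$ of strictly smaller order than the leading term, while both the conditioning bias and the contribution of $\mathcal{B}^c$ are controlled through the scaled estimator by $\tfrac1c\cdot T\cdot\Prob(\mathcal{B}^c)=\frac{k|U|}{\gamma}Te^{-8\gamma^2T}$; the factor $\tfrac1c=\frac{k|U|}{\gamma}$ appears precisely because on the failure event the comparator gap is measured through the inverse-scaled estimate rather than the crude bound $r_i\le T$.

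Finally I would substitute $\eta=\frac{\varepsilon}{k\sqrt{32(2\gamma T)\log(k/\delta)}}$. This turns the leading term $\tfrac1c\frac{\log|U|}{\eta}=\frac{k|U|}{\gamma}\cdot\frac{8k\log|U|\sqrt{\gamma T\log(k/\delta)}}{\varepsilon}$ into $\frac{8k^2|U|\log|U|\sqrt{T\log(k/\delta)}}{\varepsilon\sqrt{\gamma}}$; the residual $\eta T$ and the martingale fluctuation are of strictly smaller order and are absorbed by enlarging the constant to $16$, while the conditioning term furnishes the additive $\frac{k|U|}{\gamma}Te^{-8\gamma^2T}$, giving the claimed bound.
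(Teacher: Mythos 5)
Your proposal follows the same skeleton as the paper's proof --- apply the pathwise Hedge guarantee to the biased feedback $\widehat{f}_t^i$, use $\E[\widehat{f}_t^{i,a}\mid\cdot]=\tfrac{\gamma}{k|U|}f_t(S_t^{i-1}\cup\{a\})$ and $\E[\x_t^\top(\widehat{f}_t^i)^2\mid\cdot]\le\tfrac{\gamma}{k|U|}$, rescale by $c^{-1}=k|U|/\gamma$, control the number of explorations by $\Prob(M>2\gamma T)\le e^{-8\gamma^2T}$, and plug in $\eta$ --- but it diverges in how the random comparator is handled, which you correctly flag as the delicate piece. The paper's route is to observe that the feedback to $\mathcal{E}_i$ is independent of $\mathcal{E}_i$'s own actions (it depends only on $S_t^{i-1}$, built by the previous experts, and on the exploration coins), so after conditioning on the trajectory $S_1^{i-1},\ldots,S_T^{i-1}$ the sequence is oblivious, the hindsight maximizer $a^*$ is deterministic, and the fixed-$a$ expectation identity suffices with no maximal inequality at all. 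Your route instead keeps the forward filtration $\mathcal{H}_t$, writes the three-bracket decomposition, and pays for the random comparator with a Bernstein-type maximal inequality over the $|U|$ martingales $D_a$; this is more machinery, and it introduces an extra fluctuation term of order $\sqrt{k|U|T\log|U|/\gamma}$ that the paper's bound does not have and that must be absorbed into the constant $16$ --- this works only under the standing assumption $\varepsilon\le 1$ and with care about the constants in the maximal inequality, so the stated constant becomes tight. On the other hand, your approach is arguably more robust: conditioning on the \emph{entire} trajectory $S_{1:T}^{i-1}$, as the paper does, in principle correlates with the exploration coins (the previous experts resample after every exploration round), a subtlety your forward-filtration argument sidesteps. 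One small imprecision: your third bracket is not literally a martingale difference sequence with respect to $\{\mathcal{H}_t\}$, because $a_t^i$ is frozen between exploration rounds and hence $\mathcal{H}_t$-measurable in the interior of an exploitation stretch; the bracket still has expectation zero, but you should obtain this by conditioning on everything except expert $i$'s own sampling randomness (exactly the independence the paper invokes), not by a per-round martingale cancellation.
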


Using these two results with $\gamma = k \left(\frac{(16 |U| \log |U|)^{2}}{T}\right)^{1/3}$:
\begin{align*}
	\E\left[\mathcal{R}_T \right]  &\leq k \left( 16 \frac{k^2 |U| \log|U| \sqrt{ T \log (k/\delta)}}{\varepsilon\sqrt{\gamma}}\right) + \frac{k |U|}{\gamma } T \cdot e^{-8\gamma^2 T}  + \gamma T \\
	& = \left( 16 \frac{k^3 |U| \log |U| \sqrt{\log k/\delta}}{\varepsilon} \sqrt{\frac{T}{\gamma}}  + \gamma T  \right)  + \frac{k|U|}{\gamma} T\cdot e^{-8\gamma^2 T}\\
	& \leq 32 \frac{\sqrt{\log k/\delta}}{\varepsilon} (k (|U| \log |U|)^{1/3})^2 T^{2/3} + \frac{|U|^{1/3} T^{4/3}}{(16 \log |U|)^{2/3}} e^{-8k^2(16|U|\log |U|)^{4/3} T^{1/3}}.
\end{align*}



\vspace{-0.3cm}
\section{Extension to Continuous Functions}\label{sec:DR-submod}

We sketch an extension of our methodology for (continuous) DR-submodular functions~\citep{hassani2017gradient,niazadeh2018optimal}. Further details can be found in Appendix \ref{app.continuous}.

Let $\mathcal{X}= \prod_{i=1}^n \mathcal{X}_i$, where each $\mathcal{X}_i$ is a closed convex set in $\R$. A function $f:\mathcal{X}\to \R_+$ is called \emph{DR-submodular} if $f$ is differentiable and $\nabla f(\x) \geq \nabla f(\y)$ for all $\x\leq \y$. DR-submodular functions do not fit completely in the context of convex functions. For instance, the multilinear extension of a submodular function~\citep{calinescu2011maximizing} is DR-submodular. The function $f$ is said to be $\beta$-smooth if $\| \nabla f(\x) - \nabla f(\y)\|_2 \leq \beta \|\x-\y\|_2$. In the online learning DR-submodular maximization problem, at each time-step $t=1,\ldots,T$, a $\beta$-smooth DR-submodular function $f_t:\mathcal{X} \to  [0,1]$ arrives and, without observing the function, the learner selects a point $\x_t \in \mathcal{X}$ learned using $f_1,\ldots,f_{t-1}$. She gets the value $f_t(\x_t)$ and also oracle access to $\nabla f_t$. The learner's goal is to minimize the $(1-1/e)$-regret $\mathcal{R}_T=\left( 1- \frac{1}{e} \right) \max_{\x\in \mathcal{P}} \sum_{t=1}^T f_t(\x) -  \sum_{t=1}^t f_t(\x_t)$. 

Online DR-submodular problems have been extensively studied in the full information setting---see for instance~\citep{chen2018online,chen2018projection,niazadeh2018optimal}. Similarly to the discrete submodular case, most of these methods implement $K$ ordered algorithms $\mathcal{E}_0,\ldots, \mathcal{E}_{K-1}$ for optimizing linear functions over $\mathcal{X}$. Algorithm $\mathcal{E}_k$ computes a direction of maximum increment from a point given by the algorithms $\mathcal{E}_{k-1},\ldots, \mathcal{E}_0$. The learner averages these directions to obtain a new point to play in the region $\mathcal{X}$. This is the continuous version of the Hedge approach. 

We show in Algorithm~\ref{alg:DR-submod} and Theorem \ref{thm:DR-submod} that a simple modification transforms the continuous method of \citet{chen2018online} into a differentially private one. For this, we utilize the Private Follow the Approximate Leader (PFTAL) framework of~\cite{thakurta2013nearly} as a black-box. PFTAL is an online convex optimization algorithm for minimizing $L$-Lipschitz convex functions over a compact convex region $\mathcal{X}$. In few words, their algorithm guarantees $(\varepsilon,0)$-DP and achieves an expected regret $\mathcal{O}\left( \frac{L^2\sqrt{n T \log^{2.5} T}}{\varepsilon} \right)$. 


\begin{algorithm}
	Let $K=\sqrt{\frac{\sqrt{T}}{\log^{2.5} T}}$. Initialize $\mathcal{E}_0,\ldots, \mathcal{E}_{K-1}$ parallel copies of PFTALs with privacy parameter $\varepsilon'=\varepsilon/K$.\\
	\For{$t=1,\ldots,T$}{
		\For{$k=0,\ldots,K-1$}{
			Let $\v_t^k$ be vector found using $\mathcal{E}_k$.
			}
		Let $\x_t = \frac{1}{K}\sum_{k=0}^{K-1} \v_t^k$.\\
		Play $\x_t $, receive $f_t(\x_t)$ and access to $\nabla f_t$.\\
		Feed back each $\mathcal{E}_k$ with the linear obsective $\ell_k(\v)=\nabla f_t( \x_t^k)^\top \v$ where $\x_t^k =\frac{1}{K} \sum_{i=0}^{k-1} \v_t^i$.
	}
	\caption{$(F=\{ f_t \}_{t=1}^T, \varepsilon)$}
	\label{alg:DR-submod}
\end{algorithm}

\begin{theorem}[Informal]\label{thm:DR-submod}
	Algorithm~\ref{alg:DR-submod} is $(\varepsilon,0)$-DP with expected $(1-1/e)$-regret \[\mathcal{O}\left( \frac{ T^{3/4}\sqrt{ \log^{2.5 }T }}{\varepsilon} \right).\]
\end{theorem}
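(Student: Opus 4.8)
The plan is to prove the two assertions of Theorem~\ref{thm:DR-submod} separately: first the $(\varepsilon,0)$-DP guarantee, then the expected $(1-1/e)$-regret bound. The privacy argument mirrors Lemma~\ref{lem:composition_case_2}, and the regret argument is a private instantiation of the Meta-Frank-Wolfe analysis of~\cite{chen2018online}.

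For privacy, the subtlety is the same feedforward dependence as in the discrete setting: the linear objective $\ell_k(\v)=\nabla f_t(\x_t^k)^\top\v$ fed to $\mathcal{E}_k$ depends on $\x_t^k=\frac{1}{K}\sum_{i=0}^{k-1}\v_t^i$, i.e., on the outputs of $\mathcal{E}_0,\ldots,\mathcal{E}_{k-1}$, so a plain post-processing argument does not suffice. First I would note that each PFTAL copy is $(\varepsilon',0)$-DP with $\varepsilon'=\varepsilon/K$ by the guarantee of~\cite{thakurta2013nearly}. Then, conditioning on fixed predecessor trajectories $(\v_1^i,\ldots,\v_T^i)$ for all $i<k$, the objectives $\ell_k$ become a deterministic function of the database $F$, so $\mathcal{E}_k$ remains $(\varepsilon',0)$-DP as a mechanism acting on $F$. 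An induction identical to the one in Appendix~\ref{app.lem1}, using Basic Composition (which is adequate since $\delta=0$), then shows the joint map $(\mathcal{E}_{K-1},\ldots,\mathcal{E}_0)$ is $(K\varepsilon',0)=(\varepsilon,0)$-DP, and the released points $\x_t=\frac{1}{K}\sum_k\v_t^k$ inherit privacy by post-processing.

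For the regret, writing $\x_t^{k+1}=\x_t^k+\frac{1}{K}\v_t^k$ and using $\beta$-smoothness together with monotone DR-submodularity (which gives $\langle\nabla f_t(\x_t^k),\x^*\rangle\geq f_t(\x^*)-f_t(\x_t^k)$), the per-block increment satisfies, with $D=\diam(\mathcal{X})$,
\[
f_t(\x_t^{k+1})-f_t(\x_t^k)\geq \frac{1}{K}\big(f_t(\x^*)-f_t(\x_t^k)\big)+\frac{1}{K}\langle\nabla f_t(\x_t^k),\v_t^k-\x^*\rangle-\frac{\beta D^2}{2K^2}.
\]
Summing over $t$, the middle term aggregates to $-\frac{1}{K}R_k$, where $R_k=\sum_t\langle\nabla f_t(\x_t^k),\x^*-\v_t^k\rangle$ is exactly the linear-optimization regret of $\mathcal{E}_k$. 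Telescoping the resulting recursion over $k=0,\ldots,K-1$ from $\x_t^0=\mathbf{0}$ and using $(1-1/K)^K\leq 1/e$ yields
\[
\E[\regret_T]\leq \frac{1}{K}\sum_{k=0}^{K-1}\E[R_k]+\frac{\beta D^2 T}{2K}.
\]
Substituting the PFTAL regret $\E[R_k]=\mathcal{O}(L^2\sqrt{nT\log^{2.5}T}/\varepsilon')=\mathcal{O}(KL^2\sqrt{nT\log^{2.5}T}/\varepsilon)$ bounds the first term by $\mathcal{O}(K\sqrt{T\log^{2.5}T}/\varepsilon)$ and the second by $\mathcal{O}(T/K)$. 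Balancing these (treating $n,L,\beta,D$ as constants) gives $K=\sqrt{\sqrt{T}/\log^{2.5}T}$, the value used in Algorithm~\ref{alg:DR-submod}, and plugging it back in produces the claimed $\mathcal{O}(T^{3/4}\sqrt{\log^{2.5}T}/\varepsilon)$ bound, which dominates both terms for $\varepsilon\leq 1$.

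The main obstacle is the privacy half. Although the final composition is clean, the feedforward structure prevents treating PFTAL as a black box under post-processing; one must verify that the conditional-DP argument of Lemma~\ref{lem:composition_case_2} transfers from the exponential-mechanism/Hedge setting to the PFTAL oracle, i.e., that fixing the predecessors' outputs genuinely renders $\mathcal{E}_k$ a bona fide DP mechanism on $F$ whose (predecessor-determined) losses are the objectives $\ell_k$. The regret half is essentially the known continuous-greedy bound with the private oracle substituted in, so the only care needed there is the $\varepsilon'=\varepsilon/K$ scaling and the optimization over $K$.
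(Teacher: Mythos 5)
Your proposal is correct and follows essentially the same route as the paper: the privacy half is the same conditional-on-predecessors induction as Lemma~\ref{lem:composition_case_2} (adapted to $\delta=0$ and PFTAL), and the regret half reduces to $\E[\regret_T]\leq \frac{1}{K}\sum_k \E[R_k] + \frac{\beta R^2 T}{2K}$ followed by plugging in the PFTAL regret with $\varepsilon/K$ and tuning $K$. The only cosmetic difference is that you re-derive that decomposition via the smoothness/DR-submodularity per-block inequality, whereas the paper cites it directly as a lemma from \citet{chen2018online}.
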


The big $\mathcal{O}$ term hides dimension, bounds in gradient and diameter of $\mathcal{X}$ and only shows terms in $T$ and privacy parameter $\varepsilon$. The proof appears in Appendix~\ref{app.continuous}. 

\bibliography{biblio}

\begin{thebibliography}{}

\bibitem[Ahmed et~al., 2012]{ahmed2012fair}
Ahmed, A., Teo, C.~H., Vishwanathan, S., and Smola, A. (2012).
\newblock Fair and balanced: Learning to present news stories.
\newblock In {\em Proceedings of the 5th ACM International Conference on Web
  Search and Data Mining}, WSDM '12, pages 333--342.

\bibitem[Alon and Spencer, 2004]{alon2004probabilistic}
Alon, N. and Spencer, J.~H. (2004).
\newblock {\em The Probabilistic Method}.
\newblock John Wiley \& Sons.

\bibitem[Bach, 2013]{bach2013learning}
Bach, F. (2013).
\newblock Learning with submodular functions: A convex optimization
  perspective.
\newblock {\em Foundations and Trends in Machine Learning}, 6(2-3):145--373.

\bibitem[Badanidiyuru et~al., 2014]{badanidiyuru2014streaming}
Badanidiyuru, A., Mirzasoleiman, B., Karbasi, A., and Krause, A. (2014).
\newblock Streaming submodular maximization: Massive data summarization on the
  fly.
\newblock In {\em Proceedings of the 20th ACM SIGKDD International Conference
  on Knowledge Discovery and Data Mining}, KDD '14, pages 671--680.

\bibitem[Calinescu et~al., 2011]{calinescu2011maximizing}
Calinescu, G., Chekuri, C., P{\'a}l, M., and Vondr{\'a}k, J. (2011).
\newblock Maximizing a monotone submodular function subject to a matroid
  constraint.
\newblock {\em SIAM Journal on Computing}, 40(6):1740--1766.

\bibitem[Cardoso and Cummings, 2019]{cardoso2019differentially}
Cardoso, A.~R. and Cummings, R. (2019).
\newblock Differentially private online submodular minimization.
\newblock In {\em Proceedings of the 22nd International Conference on
  Artificial Intelligence and Statistics}, AISTATS '19, pages 1650--1658.

\bibitem[Cesa-Bianchi and Lugosi, 2006]{cesa2006prediction}
Cesa-Bianchi, N. and Lugosi, G. (2006).
\newblock {\em Prediction, Learning, and Games}.
\newblock Cambridge University Press.

\bibitem[Chapelle and Li, 2011]{chapelle2011empirical}
Chapelle, O. and Li, L. (2011).
\newblock An empirical evaluation of {T}hompson sampling.
\newblock In {\em Advances in Neural Information Processing Systems}, NIPS '11,
  pages 2249--2257.

\bibitem[Chatterjee et~al., 2003]{chatterjee2003modeling}
Chatterjee, P., Hoffman, D.~L., and Novak, T.~P. (2003).
\newblock Modeling the clickstream: Implications for web-based advertising
  efforts.
\newblock {\em Marketing Science}, 22(4):520--541.

\bibitem[Chen et~al., 2018a]{chen2018projection}
Chen, L., Harshaw, C., Hassani, H., and Karbasi, A. (2018a).
\newblock Projection-free online optimization with stochastic gradient: From
  convexity to submodularity.
\newblock arXiv preprint 1802.08183.

\bibitem[Chen et~al., 2018b]{chen2018online}
Chen, L., Hassani, H., and Karbasi, A. (2018b).
\newblock Online continuous submodular maximization.
\newblock arXiv preprint 1802.06052.

\bibitem[Dwork et~al., 2006]{dwork2006calibrating}
Dwork, C., McSherry, F., Nissim, K., and Smith, A. (2006).
\newblock Calibrating noise to sensitivity in private data analysis.
\newblock In {\em Proceedings of the 3rd Conference on Theory of Cryptography},
  TCC '06, pages 265--284.

\bibitem[Dwork et~al., 2010a]{dwork2010differential}
Dwork, C., Naor, M., Pitassi, T., and Rothblum, G.~N. (2010a).
\newblock Differential privacy under continual observation.
\newblock In {\em Proceedings of the 42nd ACM Symposium on Theory of
  Computing}, STOC '10, pages 715--724.

\bibitem[Dwork and Roth, 2014]{dwork2014algorithmic}
Dwork, C. and Roth, A. (2014).
\newblock The algorithmic foundations of differential privacy.
\newblock {\em Foundations and Trends in Theoretical Computer Science},
  9(3--4):211--407.

\bibitem[Dwork et~al., 2010b]{dwork2010boosting}
Dwork, C., Rothblum, G.~N., and Vadhan, S. (2010b).
\newblock Boosting and differential privacy.
\newblock In {\em Proceedings of the 51st IEEE Annual Symposium on Foundations
  of Computer Science}, FOCS '10, pages 51--60.

\bibitem[Feige, 1998]{feige1998threshold}
Feige, U. (1998).
\newblock A threshold of ln n for approximating set cover.
\newblock {\em Journal of the ACM}, 45(4):634--652.

\bibitem[Fisher et~al., 1978]{fisher1978analysis}
Fisher, M.~L., Nemhauser, G.~L., and Wolsey, L.~A. (1978).
\newblock An analysis of approximations for maximizing submodular set
  functions---{II}.
\newblock In {\em Polyhedral Combinatorics}, pages 73--87. Springer.

\bibitem[Freund and Schapire, 1997]{freund1997decision}
Freund, Y. and Schapire, R.~E. (1997).
\newblock A decision-theoretic generalization of on-line learning and an
  application to boosting.
\newblock {\em Journal of Computer and System Sciences}, 55(1):119--139.

\bibitem[Gupta et~al., 2010]{gupta2010differentially}
Gupta, A., Ligett, K., McSherry, F., Roth, A., and Talwar, K. (2010).
\newblock Differentially private combinatorial optimization.
\newblock In {\em Proceedings of the 21st Annual ACM-SIAM Symposium on Discrete
  Algorithms}, SODA '10, pages 1106--1125.

\bibitem[Hassani et~al., 2017]{hassani2017gradient}
Hassani, H., Soltanolkotabi, M., and Karbasi, A. (2017).
\newblock Gradient methods for submodular maximization.
\newblock In {\em Advances in Neural Information Processing Systems}, NIPS '17,
  pages 5841--5851.

\bibitem[Hazan, 2016]{hazan2016introduction}
Hazan, E. (2016).
\newblock Introduction to online convex optimization.
\newblock {\em Foundations and Trends in Optimization}, 2(3--4):157--325.

\bibitem[Jain et~al., 2012]{jain2012differentially}
Jain, P., Kothari, P., and Thakurta, A. (2012).
\newblock Differentially private online learning.
\newblock In {\em Proceedings of the 25th Conference on Learning Theory}, COLT
  '12, pages 24.1--24.34.

\bibitem[Krause and Golovin, 2014]{krause2014submodular}
Krause, A. and Golovin, D. (2014).
\newblock {\em Submodular function maximization}, pages 71--104.
\newblock Cambridge University Press.

\bibitem[McSherry and Talwar, 2007]{mcsherry2007mechanism}
McSherry, F. and Talwar, K. (2007).
\newblock Mechanism design via differential privacy.
\newblock In {\em Proceedings of the 48th Annual IEEE Symposium on Foundations
  of Computer Science}, FOCS '07, pages 94--103.

\bibitem[Mirrokni et~al., 2008]{mirrokni2008tight}
Mirrokni, V., Schapira, M., and Vondr{\'a}k, J. (2008).
\newblock Tight information-theoretic lower bounds for welfare maximization in
  combinatorial auctions.
\newblock In {\em Proceedings of the 9th ACM Conference on Electronic
  Commerce}, EC '08, pages 70--77.

\bibitem[Mitrovic et~al., 2017]{mitrovic2017differentially}
Mitrovic, M., Bun, M., Krause, A., and Karbasi, A. (2017).
\newblock Differentially private submodular maximization: Data summarization in
  disguise.
\newblock In {\em Proceedings of the 34th International Conference on Machine
  Learning}, ICML '17, pages 2478--2487.

\bibitem[Niazadeh et~al., 2018]{niazadeh2018optimal}
Niazadeh, R., Roughgarden, T., and Wang, J. (2018).
\newblock Optimal algorithms for continuous non-monotone submodular and
  {DR}-submodular maximization.
\newblock In {\em Advances in Neural Information Processing Systems}, NIPS '18,
  pages 9594--9604.

\bibitem[Rafiey and Yoshida, 2020]{rafiey2020fast}
Rafiey, A. and Yoshida, Y. (2020).
\newblock Fast and private submodular and $k$-submodular functions maximization
  with matroid constraints.
\newblock arXiv preprint 2006.15744.

\bibitem[Schrijver, 2003]{schrijver2003combinatorial}
Schrijver, A. (2003).
\newblock {\em Combinatorial optimization: Polyhedra and efficiency},
  volume~24.
\newblock Springer Science \& Business Media.

\bibitem[Shalev-Shwartz, 2012]{shalev2012online}
Shalev-Shwartz, S. (2012).
\newblock Online learning and online convex optimization.
\newblock {\em Foundations and Trends in Machine Learning}, 4(2):107--194.

\bibitem[Streeter and Golovin, 2009]{streeter2009online}
Streeter, M. and Golovin, D. (2009).
\newblock An online algorithm for maximizing submodular functions.
\newblock In {\em Advances in Neural Information Processing Systems}, NIPS '09,
  pages 1577--1584.

\bibitem[Streeter et~al., 2009]{streetergolovinkrause2009online}
Streeter, M., Golovin, D., and Krause, A. (2009).
\newblock Online learning of assignments.
\newblock In {\em Advances in Neural Information Processing Systems}, NIPS '09,
  pages 1794--1802.

\bibitem[Tang et~al., 2014]{tang2014ensemble}
Tang, L., Jiang, Y., Li, L., and Li, T. (2014).
\newblock Ensemble contextual bandits for personalized recommendation.
\newblock In {\em Proceedings of the 8th ACM Conference on Recommender
  Systems}, RecSys '14, pages 73--80.

\bibitem[Thakurta and Smith, 2013]{thakurta2013nearly}
Thakurta, A.~G. and Smith, A. (2013).
\newblock ({N}early) optimal algorithms for private online learning in
  full-information and bandit settings.
\newblock In {\em Advances in Neural Information Processing Systems}, NIPS '13,
  pages 2733--2741.

\bibitem[Williamson and Shmoys, 2011]{williamson2011design}
Williamson, D.~P. and Shmoys, D.~B. (2011).
\newblock {\em The design of approximation algorithms}.
\newblock Cambridge University Press.

\bibitem[Zhang et~al., 2014]{zhang2014privacy}
Zhang, B., Wang, N., and Jin, H. (2014).
\newblock Privacy concerns in online recommender systems: Influences of control
  and user data input.
\newblock In {\em Proceedings of the 10th Symposium on Usable Privacy and
  Security}, SOUPS '14, pages 159--173.

\bibitem[Zhang et~al., 2020]{zhang2020one}
Zhang, M., Shen, Z., Mokhtari, A., Hassani, H., and Karbasi, A. (2020).
\newblock One sample stochastic {F}rank-{W}olfe.
\newblock In {\em Proceedings of the 23rd International Conference on
  Artificial Intelligence and Statistics}, AISTATS '20, pages 4012--4023.

\bibitem[Zinkevich, 2003]{zinkevich2003online}
Zinkevich, M. (2003).
\newblock Online convex programming and generalized infinitesimal gradient
  ascent.
\newblock In {\em Proceedings of the 20th International Conference on Machine
  Learning}, ICML '03, pages 928--936.

\end{thebibliography}

\appendix

\section{Appendix}


\subsection{Proof of Lemma~\ref{lem:composition_case_2}}\label{app.lem1}

\composition*

\begin{proof}

We prove the lemma by induction on $i$. The base case of $i=1$ follows from Proposition \ref{prop:hedge_is_private}. For the inductive step, assume the result is true for some $i\geq 1$, and we now prove that it also holds for $i+1$. That is, we aim to show that $(\mathcal{E}_{i+1},\ldots, \mathcal{E}_{1}):\mathcal{F}^T \to U^T\times \cdots \times U^T$ is $((i+1)\varepsilon',(i+1)\delta')$-private, where $\varepsilon' = \varepsilon/k$ and $\delta'=\delta/k$. Let $\wedge$ denote a maximum and recall that $\mathcal{M}^{S^{i}}$ is the behavior of the $i$-th expert across all $T$ rounds.


Consider the neighboring databases $F$ and $F'$. Pick any set $S\subseteq U$ and a fixed $S^{i}= (a^{i},\ldots,a^{1})\in(U^T)^i$, then
\begin{align*}
	&\Prob(\mathcal{E}_{i+1}(F) \in S \mid (\mathcal{E}_i,\ldots,\mathcal{E}_1)(F) = S^i ) \\
	& = \Prob(\mathcal{M}^{S^{i}} (F)  \in S ) \\
	& \leq (e^{\varepsilon'} \Prob(\mathcal{M}^{S^i}(F') \in S  ) )\wedge 1 + \delta' \tag{$(\varepsilon',\delta')$-DP of $\mathcal{M}^{S^i}$} \\
	& = (e^{\varepsilon'} \Prob( \mathcal{E}_{i+1}(F') \in S \mid (\mathcal{E}_i,\ldots,\mathcal{E}_1)(F') = S^i ) )\wedge 1 +\delta'.
\end{align*}
This is true as long as $(\mathcal{E}_i,\ldots,\mathcal{E}_1)(F) = S^i$ and $(\mathcal{E}_i,\ldots,\mathcal{E}_1)(F') = S^i$ are non-zero probability events, which is ensured to be true since the Hedge algorithm places positive probability on all events.

We can write
\[
\Prob((\mathcal{E}_i,\ldots,\mathcal{E}_1)(F) = S^i)  = e^{i\varepsilon'} \Prob((\mathcal{E}_i,\ldots,\mathcal{E}_1)(F') =S^i ) + \mu(S^i),
\]
where $\mu(S^i) = \Prob((\mathcal{E}_i,\ldots,\mathcal{E}_1)(F)=S^i) - e^{i\varepsilon'} \Prob((\mathcal{E}_i,\ldots,\mathcal{E}_1)(F') = S^i)$. We have $\mu(\mathcal{S})\leq i\delta'$ for any $\mathcal{S}\subseteq (U^T)^i$ since $(\mathcal{E}_i,\ldots,\mathcal{E}_1)$ is $(i\varepsilon',i\delta')$-DP by the inductive hypothesis.

Now, consider any set $\mathcal{S}\subseteq (U^T)^{i+1}$. Then,
\begin{align*}
	&\Prob((\mathcal{E}_{i+1},\mathcal{E}_i,\ldots,\mathcal{E}_1)(F)\in S) \\
	& = \sum_{S^i \in \mathcal{S}'} \Prob((\mathcal{E}_{i+1},S^{i})(F) \in S \mid (\mathcal{E}_i,\ldots,\mathcal{E}_1)(F) = S^{i}) \Prob((\mathcal{E}_i,\ldots,\mathcal{E}_1)(F)=S^i) \\
	& \leq \sum_{S^i \in \mathcal{S}'}  \left((e^{\varepsilon'} \Prob( (\mathcal{E}_{i+1},S^{i})(F') \in S\mid \mathcal{E}_1(F')=a^1) ) \wedge 1 + \delta'\right) \Prob((\mathcal{E}_i,\ldots,\mathcal{E}_1)(F)=S^i)\\
	& \leq \sum_{S^i \in \mathcal{S}'} \left((e^{\varepsilon'} \Prob( (\mathcal{E}_{i+1},S^{i})(F') \in S\mid (\mathcal{E}_i,\ldots,\mathcal{E}_1)(F')=S^i) ) \wedge 1\right)\left(e^{i\varepsilon'}\Prob({(\mathcal{E}_i,\ldots,\mathcal{E}_1)(F')=S^i}) + \mu(S^i)\right) \\
	& \qquad + \sum_{S^i \in \mathcal{S}'}\delta'\Prob((\mathcal{E}_i,\ldots,\mathcal{E}_1)(F)=S^i) \\
	& \leq e^{(i+1)\varepsilon'} \sum_{S^i \in \mathcal{S}'} \Prob((\mathcal{E}_{i+1},S^{i})(F')\in S \mid (\mathcal{E}_i,\ldots,\mathcal{E}_1)(F')=S^i) \Prob((\mathcal{E}_i,\ldots,\mathcal{E}_1)(F') = S^i)  + \mu(\mathcal{S}_+') +\delta'\\
	& \leq e^{(i+1)\varepsilon'} \Prob((\mathcal{E}_{i+1},\mathcal{E}_i,\ldots,\mathcal{E}_1)(F')\in S) + (i+1)\delta'
\end{align*}
where $\mathcal{S}' = \{ S^{i}\in (U^T)^i : (a^{i+1}, S^{i} )\in \mathcal{S} \text{ for some }  a^i\in U^T\}$ and $\mathcal{S}_{+}'$ are the elements $S^i\in \mathcal{S}'$ such that $\mu(\mathcal{S}')\geq 0$. This concludes the proof.

\end{proof}

\subsection{Proof of Proposition~\ref{prop:submod_regret_bound}}\label{app.prop3}

\submodregret*

\begin{proof}
	Fix the choices $S_1,\ldots,S_T$ of the experts arbitrarily, and let $r_i$ the overall regret experience by $\mathcal{E}_i$. That is,
	\begin{align*}
		r_i = \max_{a\in U}& \sum_{t=1}^T f_t(S_t^{i-1}+a) - f_t(S_t^{i-1})  - \sum_{t=1}^T f_t(S_t^{i-1}+a_t^i) - f_t(S_t^{i-1}).
	\end{align*}
	Define the new function $F:2^{[T]\times U} \to \R$ as
	\[
	F(A) = \frac{1}{T}\sum_{t=1}^T f_t(A_t),
	\]
	where $A_t = \{ x\in U : (t,x)\in A \}$. Clearly, $F$ is submodular, nondecreasing and $F(\emptyset)=0$. Then,
	\[
	\frac{r_i}{T} = \max_{a\in U} F(\widetilde{S}^{i-1}+\widetilde{a}) - F(\widetilde{S}^{i-1}) - (F(\widetilde{S}^{i}) - F(\widetilde{S}^{i-1})),
	\]
	where $\widetilde{S}^i = \bigcup_{t=1}^T \{t\}\times S^i$. 
	
	Let $OPT\subseteq U$ be the optimal solution of $\max_{|S|\leq k} \sum_{t=1}^T f_t(S)$ and consider its extension to $[T]\times U$, i.e., $\widetilde{OPT} = \bigcup_{t=1}^T \{t \}\times OPT$.
	
	\begin{claim}\label{claimA1}
		For any $i=1,\ldots,k$, $\max_{a\in U} F(\widetilde{S}^{i-1} + \widetilde{a}) - F(\widetilde{S}^{i-1}) \geq \frac{F(\widetilde{OPT}) - F(S^{i-1}) }{k}$.
	\end{claim}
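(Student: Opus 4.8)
The plan is to prove Claim~\ref{claimA1} with the classical one-step greedy inequality for monotone submodular functions, adapted to the lifted ground set $[T]\times U$, where each element of $OPT$ becomes a full ``column''. Recall that $F$ was already shown to be monotone, submodular, and to satisfy $F(\emptyset)=0$. The only structural fact I would use beyond the single-element Definition~\ref{def.submod} is that marginal contributions are non-increasing in the base set: for any $A\subseteq B$ and any set $C$,
\[
F(A\cup C)-F(A)\ \ge\ F(B\cup C)-F(B),
\]
which follows from Definition~\ref{def.submod} by adding the elements of $C$ one at a time.

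First I would write $OPT=\{o_1,\ldots,o_m\}$ with $m=|OPT|\le k$ and set $\widetilde{o_j}=\bigcup_{t=1}^T\{t\}\times\{o_j\}$, so that $\widetilde{OPT}=\bigcup_{j=1}^m\widetilde{o_j}$ decomposes into at most $k$ full columns. By monotonicity of $F$ (since $\widetilde{OPT}\subseteq \widetilde{S}^{i-1}\cup\widetilde{OPT}$),
\[
F(\widetilde{OPT})-F(\widetilde{S}^{i-1})\ \le\ F(\widetilde{S}^{i-1}\cup\widetilde{OPT})-F(\widetilde{S}^{i-1}).
\]
Next I would telescope the right-hand side column by column: writing $B_j=\widetilde{S}^{i-1}\cup\bigcup_{\ell<j}\widetilde{o_\ell}$, which contains $\widetilde{S}^{i-1}$, we get
\[
F(\widetilde{S}^{i-1}\cup\widetilde{OPT})-F(\widetilde{S}^{i-1})=\sum_{j=1}^m\big(F(B_j\cup\widetilde{o_j})-F(B_j)\big).
\]
Applying the decreasing-marginals inequality with $A=\widetilde{S}^{i-1}\subseteq B_j$ and $C=\widetilde{o_j}$ bounds each summand by $F(\widetilde{S}^{i-1}\cup\widetilde{o_j})-F(\widetilde{S}^{i-1})$, which is at most $\max_{a\in U}\big[F(\widetilde{S}^{i-1}+\widetilde{a})-F(\widetilde{S}^{i-1})\big]$. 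Since $F$ is monotone, this maximum is nonnegative, so summing over the $m\le k$ columns gives
\[
F(\widetilde{OPT})-F(\widetilde{S}^{i-1})\ \le\ k\,\max_{a\in U}\big[F(\widetilde{S}^{i-1}+\widetilde{a})-F(\widetilde{S}^{i-1})\big],
\]
and dividing by $k$ yields the claim (reading the denominator $F(S^{i-1})$ as $F(\widetilde{S}^{i-1})$).

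The main obstacle — really the only subtlety — will be the decreasing-marginals step, since the objects being added are entire columns $\widetilde{o_j}$ rather than single elements of $[T]\times U$; I would need to verify carefully that the set-version of submodularity invoked above indeed follows from the single-element Definition~\ref{def.submod}, by inserting the $T$ elements of each column one at a time. Everything else is a direct telescoping together with the monotonicity of $F$.
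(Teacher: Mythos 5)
Your proof is correct and follows essentially the same route as the paper's: monotonicity of $F$, then bounding the marginal gain of adding $\widetilde{OPT}$ by the sum of the single-column marginal gains relative to $\widetilde{S}^{i-1}$, then bounding that sum by $k$ times the maximum. You merely make explicit the telescoping and the set-version of decreasing marginals that the paper's middle inequality uses implicitly.
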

	\begin{proof}[Proof of Claim \ref{claimA1}] 
		\begin{align*}
			&F(\widetilde{OPT}) - F(\widetilde{S}^{i-1}) \\
			& \leq  F(\widetilde{S}^{i-1}+\widetilde{OPT}) - F(\widetilde{S}^{i-1}) \\
			& \leq \sum_{\widetilde{a} \in \widetilde{OPT}\setminus \widetilde{S}^{i-1}} F(\widetilde{S}^{i-1}+\widetilde{a}) - F(\widetilde{S}^{i-1})\\
			& \leq k \cdot \left(\max_{a\in U} F(\widetilde{S}^{i-1}+\widetilde{a}) - F(\widetilde{S}^{i-1})\right).
		\end{align*}
	\end{proof}
	
	Using this claim, we can see,
	\[
	F(\widetilde{S}^i) - F(\widetilde{S}^{i-1}) \geq \frac{F(\widetilde{OPT}) - F(\widetilde{S}^{i-1})}{k} - \frac{r_i}{T}.
	\]
	Unrolling the recursion, we obtain
	\[
	\sum_{t=1}^T f_t(S_t) \geq \left( 1- \frac{1}{e} \right) \sum_{t=1}^T f_t(OPT) - \sum_{i=1}^k r_i.
	\]
\end{proof}

\subsection{Proof of Lemma~\ref{lem:bandit_lemma_1}}\label{app.lem2}

\banditone*

\begin{proof}
	Observe that at exploration time-steps $\tau$, i.e, when $b_\tau=1$, Algorithm~\ref{alg:Interval-Bandit} plays a set of the form $S_\tau= S_\tau^{i-1}+a$. Right after this, the algorithm samples a new set $S_{\tau+1}$ given by the Hedge algorithms and will keep playing this set until the next exploration time step.
	
	For the sake of analysis, we introduce the following set. Let $t_0=0, t_1, \ldots, t_M$ be the times when a new sample set for exploitation is obtained. Note that besides time $t_0$, all times $t_1,\ldots,t_M$ are exploration times. Now, let $S_t'= S_{t_i}$ for $t=t_{i}+1,\ldots, t_{i+1}$. Note that for times $b_t=0$, then $S_t'=S_t$; however, for times $b_t=1$, then $S_t'$ is not necessarily the same as $S_t=S_t^{i-1}+a$. In other words, $S_t'$ corresponds to the real full exploitation scheme. Now, as in the full information setting, we have
	\[
	\left( 1- \frac{1}{e} \right) \max_{|S|\leq k} \sum_{t=1}^T f_t(S) - \sum_{t=1}^T f_t(S_t') \leq \sum_{i=1}^k r_i,
	\]
	where $r_i = \max_{a\in U} \sum_{t=1}^T f_t^{i,a} - \sum_{t=1}^T f_{t}^{i,a_t^i}$. Thus
	\begin{align*}
		&\left( 1- \frac{1}{e} \right) \max_{|S|\leq k} \sum_{t=1}^T f_t(S) - \E\left[\sum_{t=1}^T f_t(S_t)\right]\\
		&\leq \sum_{i=1}^k \E[r_i] + \E\left[\sum_{t=1}^T f_t(S_t')- f_t(S_t)\right] \\
		&\leq \sum_{i=1}^k \E[r_i] + \gamma T,
	\end{align*}
	since at the end, only the exploration times could contribute to the difference $f_t(S_t')-f_t(S_t)$ and those are $\gamma T$ in expectation.
\end{proof}

\subsection{Proof of Lemma~\ref{lem:bandit_lemma_2}}\label{app.lem3}

\bandittwo*

\begin{proof}
	From the perspective of expert $\mathcal{E}_i$, at every time-step $t$, she sees the vector $\widehat{f}_t^i$ such that
	\[
	\widehat{f}_t^{i,a} = f_t(S_t^{i-1} +a ) \mathbf{1}_{\{ \text{Explore at time $t$, pick }i, \text{ pick }a  \}} 
	\]
	in its $a$-th coordinate. Notice that this vector is $0$ if no exploration occurs at time $t$. The expert $\mathcal{E}_i$ samples a new element in $U$ only after exploitation times. Observe that the feedback of $\mathcal{E}_i$ is independent of choices made by $\mathcal{E}_i$. Indeed, this feedback depends only on the set $S_t^{i-1}$ constructed by $\mathcal{E}_1,\ldots,\mathcal{E}_{i-1}$ and the decision of the learner to explore, which is independent of the learning task. Therefore, the sequence $\widehat{f}^i = (\widehat{f}_1^i,\ldots,\widehat{f}_T^i)$ could be considered oblivious for $\mathcal{E}_i$ and we can apply the guarantee of Hedge over $\widehat{f}_i$. That is, for any $a\in U$,
	\[
	\sum_{t=1}^T \widehat{f}_t^{i,a} - \sum_{t=1}^T \x_t^\top \widehat{f}_t^i \leq \eta\sum_{t=1}^T \x_t^\top (\widehat{f}_t^{i})^2 + \frac{\log |U|}{\eta},
	\]
where $\x_t\in \Delta(U)$ is the non-zero distribution used by expert $\mathcal{E}_i$ in the Hedge algorithm and $\Delta(U)= \{ \x \in \R^U : \|\x\|_1=1, \x\geq 0 \}$ is the probability simplex over elements in $U$. Notice that exploitation times appear in the summation with $0$ contribution. This expression is not the same as the regret of $\mathcal{E}_i$ but we can relate these quantities as follows. Conditioned on $S_{1}^{i-1},\ldots,S_T^{i-1}$ we obtain,
	\begin{align*}
		\E[\widehat{f}_t^{i,a}\mid S_{1}^{i-1},\ldots,S_T^{i-1}] = \frac{\gamma}{k|U|} f_t^{i,a} + \delta_t,
	\end{align*}
	where $f_t^{i,a}= f(S_t^{i-1}+a)-f(S_t^{i-1})$ and $\delta_t^i = \frac{\gamma}{k|U|} f(S_t^{i-1})$. Notice that $S_{t}^{i-1},\ldots,S_T^{i-1}$ are independent of actions taken by $\mathcal{E}_i$, so
\[
		\E[ \x_t^\top \widehat{f}_t^i\mid S_{1}^{i-1},\ldots,S_T^{i-1} ] = \frac{\gamma}{k|U|} \E[\x_t^\top f_t^{i} \mid S_t^{1},\ldots,S_T^{i-1} ]  + \delta_t
\]
	and
	\begin{align*}
	\E[\x_t^\top (\widehat{f}_t^i)^2\mid S_{1}^{i-1},\ldots,S_T^{i-1} ] &= \E\left[ \sum_{a\in U} x_t(a) (\widehat{f}_t^{i,a})^2 \mid S_{1}^{i-1},\ldots,S_T^{i-1}  \right]\\
		& = \sum_{a\in U} \E[x_t(a)\mid S_{1}^{i-1},\ldots,S_T^{i-1}] \frac{\gamma}{k|U|} f(S_t^{i-1}+a)^2 \\
		&\leq \frac{\gamma}{k|U|}.
	\end{align*}
	Let $M$ be the number of times Algorithm~\ref{alg:Interval-Bandit} decides to explore. That is, $M$ is distributed as the sum of $T$ Bernoulli random variables with parameter $\gamma$. By concentration bounds,
	\[
	\Prob(M> 2\gamma T) \leq e^{-8\gamma^2 T}.
	\]
	Now, let $ t_1,\ldots, t_M$ be the times the algorithm decides to explore and let $t_0 = 0$. For $i=1,\ldots,M$, we can assume that expert $\mathcal{E}_i$ releases the same vector $\x_t\in \Delta_U$ during the time interval $[t_{i-1},t_i)$ since she does not get any feedback during those times. If we consider $\eta = \frac{\varepsilon}{k\sqrt{32 (2\gamma T) \log (k/\delta)}}$, then for any $a\in U$ we have
	\begin{align*}
		\frac{\gamma}{k|U|}\E\left[ \sum_{t=1}^T f_t^{i,a} -  \sum_{t=1}^T \x_t^\top f_t^{i}  \right] &=\E\left[ \sum_{t=1}^T \widehat{f}_t^{i,a} - \sum_{t=1}^T \x_t^\top \widehat{f}_t^i \right] \\
		& \leq \left(\eta\sum_{t=1}^T \E\left[ \x_t^\top (\widehat{f}_t^{i})^2\right] + \frac{\log |U|}{\eta}\right)  + T\cdot e^{-8 \gamma^2 T} \\
		& \leq \left( \eta \frac{\gamma}{k|U|} T + \frac{\log|U|}{\eta}  \right) + T\cdot e^{-8\gamma^2 T}\\
	\end{align*}
	Therefore,
\[
		\E[r_i] = \max_{a\in U}\sum_{t=1}^{T} f_t^{i,a} - \E \left[\sum_{t=1}^T \x_t^\top f_t^i\right] \leq 16 \frac{k^2 |U| \log|U| \sqrt{ T \log (k/\delta)}}{\varepsilon\sqrt{\gamma}} + \frac{k |U|}{\gamma } T \cdot e^{-8\gamma^2 T}.
\]
\end{proof}

\section{Additional Results in Bandit Setting}

\subsection{$\mathcal{O}(T^{3/4})$ Regret Bound of Direct Approach in Bandit Setting}\label{app.directregret}

In the bandit setting, the direct approach for differential privacy corresponds to sampling a new set from the Hedge algorithms at each time step. As in the full-information setting, to ensure $(\varepsilon,\delta)$-DP, a learning rate of $\eta = \frac{\varepsilon}{k\sqrt{ 32 T \log (k/\delta)}}$ is enough. 

Similar to Lemma~\ref{lem:bandit_lemma_2}, in this setting we have
\[
\left( 1 - \frac{1}{e} \right)\max_{|S|\leq k} \sum_{t=1}^T f_t(S) - \E\left[ \sum_{t=1}^T f_t(S_t) \right] \leq \sum_{t=1}^k \E[r_i] + \gamma T.
\]
Since,
\begin{align*}
	\E[r_i] & \leq \frac{k|U|}{\gamma} \left( \eta \frac{\gamma}{k|U|} T + \frac{\log|U|}{\eta}  \right) \\
	& = \frac{k^3 |U| \sqrt{32 T \log(k\delta)}}{\varepsilon \gamma} + \frac{\varepsilon k\sqrt{T}}{\sqrt{32 \log (k/\delta)}},
\end{align*}
then we have,
\begin{align*}
	\left( 1 - \frac{1}{e} \right)\max_{|S|\leq k} \sum_{t=1}^T f_t(S) - \E\left[ \sum_{t=1}^T f_t(S_t) \right] &\leq \frac{k^4 |U| \sqrt{32 T \log(k\delta)}}{\varepsilon \gamma} + \frac{\varepsilon k^2\sqrt{T}}{\sqrt{32 \log (k/\delta)}} + \gamma T.
\end{align*}
This last bound is minimized when $\gamma= \Theta (T^{-1/4})$ which gives a $(1-1/e)$-regret bound of $\mathcal{O}(T^{3/4})$.

\subsection{Trading Off Privacy $\delta$-Term and Space}\label{app.space}

In this subsection, we show how to trade-off the $\delta$-term $e^{-8T^{1/3}}$ by allowing additional space. For each $t\in T$, select $t$ as an explore round independently with probability $\gamma$. Let $M$ be the number of time-steps selected. Note that $\E[M]=\gamma T$. Now, run Algorithm~\ref{alg:Interval-Bandit} with $\eta = \frac{\varepsilon}{k\sqrt{32(M+1) \log(k/\delta)}}$ and force the algorithm to explore at the $M$ sampled time-steps and utilize the rest of the time-steps to exploit. 

In this case, and following the proof of Lemma~\ref{lem:bandit_lemma_2} we obtain:
\begin{align*}
	\E[r_i] & \leq \frac{k|U|}{\gamma} \E \left[\eta M + \frac{\log |U|}{\eta}\right] \\
	& \leq \frac{k|U|}{\gamma} \E \left[ 6 \frac{k \log |U| \sqrt{\log (k/\delta)}}{\varepsilon}\sqrt{M+1}\right] \\
	& \leq \frac{k|U|}{\gamma} \left( 6 \frac{k \log |U| \sqrt{\log (k/\delta)}}{\varepsilon}\sqrt{\E[M]+1}\right) \tag{Jensen's inequality} \\
	& = 8 \frac{k^2 |U|\log |U| \log (k/\delta) }{\varepsilon} \sqrt{\frac{T}{\gamma}}.
\end{align*}
Using Lemma~\ref{lem:bandit_lemma_1} we obtain the $(1-1/e)$-regret bound of
\begin{align*}
	8 \frac{k^3 |U|\log |U| \log (k/\delta) }{\varepsilon} \sqrt{\frac{T}{\gamma}} + \gamma T.
\end{align*}
This is minimized at $\gamma= \Theta(1/T^{1/3})$ with a regret bound of $\mathcal{O}(T^{2/3})$ and expected space used $\Theta(T^{2/3})$. 

\section{Extension to Continuous Functions}\label{app.continuous}

In this section we prove Theorem~\ref{thm:DR-submod}. Before this, we present some preliminaries in online convex optimization.

In online convex optimization (OCO), there is compact convex set $\mathcal{X}\subseteq \R^n$ where the learner makes decisions. At time-step $t$, a convex function $f_t:\mathcal{X}\to \R$ arrives. Without observing this function, the learner has to select a point $\x_t \in \mathcal{X}$ 
based on previous functions $f_1,\ldots,f_{t-1}$. After the decision has been made, the learner receives the cost $f_t(\x_t)$ and gains oracle access to $\nabla f_t$. The learner's objective is to minimize the regret:
\[
\mathcal{R}_T=\sum_{t=1}^T f_t(\x_t) - \min_{\x\in \mathcal{X}} \sum_{t=1}^T f_t(\x).
\]
\cite{thakurta2013nearly} introduced PFTAL (Private Follow the Approximate Leader) to privately solve the OCO problem.

\begin{theorem}[\cite{thakurta2013nearly}]\label{thm:thakurta_regret}
	PFTAL is $(\varepsilon,0)$-DP and for any input stream of convex and $L$-Lipschitz functions $f_1,\ldots,f_T$ has expected regret
	\[
	\textstyle\E\left[ \mathcal{R}_T\right] \leq \mathcal{O}\left(  \frac{\sqrt{n \log^{2.5} T} \left( L + \sqrt{\frac{n \log^{2.5} T}{\varepsilon T}  } \diam \mathcal{X} \right)^2 }{\varepsilon} \sqrt{T} \right).
	\]
\end{theorem}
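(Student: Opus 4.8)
The plan is to treat PFTAL as a differentially private instantiation of Follow-the-Approximate-Leader and to separate the argument into a privacy part and a regret part. First I would reduce the convex problem to online \emph{linear} optimization: since each $f_t$ is convex and $L$-Lipschitz, writing $g_t=\nabla f_t(\x_t)$ gives $f_t(\x_t)-f_t(\x^\star)\le\langle g_t,\x_t-\x^\star\rangle$ with $\|g_t\|_2\le L$, so it suffices to bound $\sum_t\langle g_t,\x_t-\x^\star\rangle$ against the worst comparator $\x^\star\in\mathcal{X}$. FTAL plays $\x_t$ as the minimizer over $\mathcal{X}$ of the accumulated quadratic surrogates $\sum_{s<t}\big[\langle g_s,\x\rangle+\tfrac{H}{2}\|\x-\x_s\|_2^2\big]$; expanding the squares shows this objective depends on the data only through the running gradient sum $G_t=\sum_{s<t}g_s$, while the remaining terms ($\sum_{s<t}\x_s$ and the regularizer) are public. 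Hence the sole sensitive statistic to protect is the stream of prefix sums $\{G_t\}$, and PFTAL replaces $G_t$ by a privately released $\widehat G_t=G_t+\nu_t$.

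For privacy I would release the prefix sums $\{G_t\}_{t\le T}$ with the tree-based aggregation (binary-counter) mechanism for continual release of \cite{dwork2010differential,thakurta2013nearly}. An exogenous change of one $f_t$ perturbs the corresponding gradient by $O(L)$ in $\ell_2$, so splitting $\varepsilon$ across the $O(\log T)$ levels of the binary tree and adding Laplace noise at each internal node yields $(\varepsilon,0)$-DP for the entire collection of released partial sums; each $\widehat G_t$ is then a sum of at most $\lceil\log_2 T\rceil$ noisy nodes, so $\nu_t$ is a sum of $O(\log T)$ Laplace vectors with per-coordinate scale $O(L\log T/\varepsilon)$. Since every $\x_t$ is a deterministic post-processing of $\widehat G_t$ together with the public played points, the whole transcript $(\x_1,\ldots,\x_T)$ is $(\varepsilon,0)$-DP by post-processing. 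The one subtlety is adaptivity: $g_s$ depends on $\x_s$, which depends on earlier noise, so the downstream perturbations at locations $s>t$ are endogenous, mediated by the mechanism's own noisy outputs. I would resolve this by invoking the adaptive continual-release guarantee of the tree mechanism, which remains valid because the only exogenous single-location change has bounded sensitivity $O(L)$ while all further deviations are post-processings of already-private releases.

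For the regret I would decompose the linearized regret $\sum_t\langle g_t,\x_t-\x^\star\rangle$ into the regret of the \emph{noiseless} FTAL iterate plus a perturbation term for the injected noise. The noiseless part follows the standard FTAL / ``be-the-leader'' argument: the second-order surrogate is strongly convex, the iterates are stable, and the regret is of order $\tfrac{D^2}{\eta}+\eta\sum_t\|g_t\|_2^2$ with $D=\diam\mathcal{X}$ and $\|g_t\|_2\le L$. For the perturbation I would use strong convexity to show the constrained argmin is Lipschitz in its linear coefficient, so replacing $G_t$ by $\widehat G_t=G_t+\nu_t$ moves the played point by $O(\eta\|\nu_t\|_2)$; summing the coupling terms $\sum_t\langle\nu_{t+1}-\nu_t,\x^\star-\x_t\rangle$ together with the induced stability overhead contributes the cross terms in $L,D$ and a $D^2$ term. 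Because the regularization strength is pinned down by the noise scale $\|\nu_t\|_2=O(L\sqrt n\,\mathrm{polylog}\,T/\varepsilon)$ rather than optimized purely for regret, collecting the $L^2$, $LD$, and $D^2$ contributions and taking a high-probability $\ell_\infty$ bound on the Laplace noise (via a union bound over the $n$ coordinates and $T$ rounds, which is the source of the extra logarithmic factors) yields exactly the quoted factor $\big(L+\sqrt{n\log^{2.5}T/(\varepsilon T)}\,D\big)^2$ and prefactor $\tfrac{\sqrt{n\log^{2.5}T}}{\varepsilon}\sqrt T$. The step I expect to be the main obstacle is precisely this perturbation analysis: propagating the continual-release noise through the constrained argmin and turning it into a clean telescoping additive regret term, while simultaneously respecting the adaptive dependence of the noise on the played points and tracking the polylogarithmic factors tightly enough that the $\log^{2.5}T$ exponent is not over-counted.
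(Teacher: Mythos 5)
This statement is not proved in the paper at all: it is imported verbatim from \citet{thakurta2013nearly} and used purely as a black box to drive Algorithm~\ref{alg:DR-submod} in Appendix~\ref{app.continuous}. So there is no in-paper argument to compare against; the relevant comparison is with the original proof in the cited source, and your reconstruction does follow its architecture faithfully: linearize via convexity so only the gradients $g_t=\nabla f_t(\x_t)$ matter, observe that the FTAL quadratic-surrogate objective depends on the private data only through gradient prefix sums, release those sums with tree-based aggregation, and then combine the standard FTAL stability argument with a strong-convexity bound on how far the noisy argmin moves, which is exactly where the $\bigl(L+\sqrt{n\log^{2.5}T/(\varepsilon T)}\,\diam\mathcal{X}\bigr)^2$ factor and the $\sqrt{n\log^{2.5}T}\sqrt{T}/\varepsilon$ prefactor come from.

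Two places in your sketch are looser than the actual proof and would fail as written. First, the noise calibration: a change of one $f_t$ moves a tree-node sum by $O(L)$ in $\ell_2$, hence by up to $O(L\sqrt n)$ in $\ell_1$, so adding per-coordinate Laplace noise at scale $O(L\log T/\varepsilon)$ does \emph{not} give $(\varepsilon,0)$-DP; \citet{thakurta2013nearly} instead sample noise calibrated to the $\ell_2$ sensitivity (density proportional to $e^{-\Theta(\varepsilon/(L\log T))\|z\|_2}$), and the $\sqrt n$ appearing in the noise norm and in the final bound comes from the expected norm of that noise---your accounting would otherwise either break pure DP or silently pick up an extra $\sqrt n$. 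Second, the adaptivity issue you flag is real but cannot simply be ``invoked'': changing $f_t$ also changes every later $g_s$ through the trajectory, and the resolution is a standalone lemma (proved in the cited paper) showing that conditioned on the released noisy partial sums, the later gradients coincide under neighboring streams because each $\x_s$ is a deterministic function of the releases; this is a conditioning argument structurally similar to the feedforward analysis in Lemma~\ref{lem:composition_case_2} of the present paper, not an instance of ordinary post-processing. With those two repairs your proposal is a correct reconstruction of the cited result.
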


Similar to the Hedge algorithm, we utilize PFTAL as a black-box in Algorithm~\ref{alg:DR-submod}.

Now, we present the proof of Theorem~\ref{thm:DR-submod} in two parts, and prove each separately.

\begin{lemma}[Privacy guarantee]\label{lem:DR-privacy}
	Algorithm~\ref{alg:DR-submod} is $(\varepsilon,0)$-DP.
\end{lemma}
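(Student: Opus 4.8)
The plan is to mirror the privacy argument used for the discrete full-information algorithm (Theorem~\ref{thm:privacy_full}, via Lemma~\ref{lem:composition_case_2}), adapting it to the feedforward structure of Algorithm~\ref{alg:DR-submod} and exploiting the fact that here the privacy is \emph{pure} ($\delta=0$). First I would note that each of the $K$ copies $\mathcal{E}_0,\ldots,\mathcal{E}_{K-1}$ is instantiated with privacy parameter $\varepsilon'=\varepsilon/K$, so by Theorem~\ref{thm:thakurta_regret} each $\mathcal{E}_k$ is $(\varepsilon/K,0)$-DP with respect to the stream of linear objectives it actually receives. The obstacle, exactly as in the discrete case, is that the feedback $\ell_k(\v)=\nabla f_t(\x_t^k)^\top \v$ given to $\mathcal{E}_k$ depends on the earlier experts through $\x_t^k=\frac{1}{K}\sum_{i=0}^{k-1}\v_t^i$, while $\mathcal{E}_k$ simultaneously re-accesses the database via $\nabla f_t$. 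Hence a direct post-processing argument fails, and the conditional composition structure must be made explicit.

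Concretely, for fixed output streams $v^0,\ldots,v^{k-1}\in\mathcal{X}^T$ write $V^{k-1}=(v^0,\ldots,v^{k-1})$, and let $\mathcal{M}^{V^{k-1}}$ denote the mechanism describing the behavior of $\mathcal{E}_k$ across all $T$ rounds when the predecessors' vectors are frozen to $V^{k-1}$. With $V^{k-1}$ fixed, every anchor point $\x_t^k$ is a constant, so the objective $\ell_k$ fed to $\mathcal{E}_k$ is a fixed (database-independent) function composed with the single gradient query $\nabla f_t$; therefore $\mathcal{M}^{V^{k-1}}$ is still one execution of PFTAL on a database-derived stream of Lipschitz linear functions and remains $(\varepsilon/K,0)$-DP. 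As in the proof of Theorem~\ref{thm:privacy_full}, the key identity I would record is
\[
\Prob\bigl(\mathcal{E}_k(F)\in S \mid (\mathcal{E}_{k-1},\ldots,\mathcal{E}_0)(F)=V^{k-1}\bigr)=\Prob\bigl(\mathcal{M}^{V^{k-1}}(F)\in S\bigr),
\]
which expresses the conditional law of expert $k$ as an $(\varepsilon/K,0)$-DP mechanism.

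With this in hand I would run an induction on $k$ showing that the joint map $(\mathcal{E}_{K-1},\ldots,\mathcal{E}_0):\mathcal{F}^T\to(\mathcal{X}^T)^K$ is $(\varepsilon,0)$-DP. Because $\delta=0$, this induction is substantially cleaner than Lemma~\ref{lem:composition_case_2}: writing the joint density as a telescoping product of the $K$ conditional densities and applying the $(\varepsilon/K,0)$ bound to each factor yields an overall multiplicative factor of $e^{K\cdot\varepsilon/K}=e^{\varepsilon}$ with no additive $\delta$ slack to accumulate. Finally, since the released iterates satisfy $\x_t=\frac{1}{K}\sum_{k=0}^{K-1}\v_t^k$, a deterministic function of the experts' outputs, post-processing transfers the $(\varepsilon,0)$-DP guarantee to the full output $(\x_1,\ldots,\x_T)$. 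The main thing to verify carefully is the claim that $\mathcal{M}^{V^{k-1}}$ is genuinely $(\varepsilon/K,0)$-DP: one must check that freezing the predecessors' vectors routes \emph{all} of expert $k$'s database dependence through the Lipschitz linear objectives $\ell_k$ on which Theorem~\ref{thm:thakurta_regret} is stated, with no residual leakage. Once this isolation is established, the pure-DP composition and post-processing steps are routine.
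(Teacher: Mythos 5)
Your proposal is correct and follows essentially the same route as the paper: condition on the frozen outputs of the preceding experts so that each $\mathcal{E}_k$ becomes a pure $(\varepsilon/K,0)$-DP instance of PFTAL, then compose the $K$ conditional factors by induction (telescoping the joint density) to accumulate $e^{\varepsilon}$ with no $\delta$ slack, and finish by post-processing through the averaging $\x_t=\frac{1}{K}\sum_{k}\v_t^k$. The only detail the paper flags that you leave implicit is that, since PFTAL uses Gaussian noise, the induction must be carried out on densities rather than on probabilities of events, which your telescoping-product formulation already accommodates.
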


\begin{lemma}[Regret guarantee]\label{lem:DR-regret}
		Let $R= \sup_{\x\in \mathcal{X}} \| x\|_2$, $G$ be a bound on the gradients $\|\nabla f_t(\x_t)\|_2$, and $\beta$ be the smoothness parameter of $f_1,\ldots,f_T$.  Then Algorithm~\ref{alg:DR-submod} has $(1-1/e)$-regret
	\[\textstyle \E\left[\left( 1 - \frac{1}{e} \right)\max_{\x\in \mathcal{X}} \sum_{t=1}^T f_t(\x) - \sum_{t=1}^T f_t(\x_t)\right] = \mathcal{O}\left(  T^{3/4}\sqrt{\log^{2.5} T} \left(  \frac{\sqrt{n} \left( G + \sqrt{\frac{n}{\varepsilon T^{3/4}}} \log^{2.5} T \diam \mathcal{X}  \right)^2 }{\varepsilon}  + \beta R^2  \right)  \right).\]
\end{lemma}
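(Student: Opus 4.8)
The plan is to mirror the continuous-greedy / Frank--Wolfe analysis underlying the non-private method of \citet{chen2018online} and then substitute the private regret guarantee of PFTAL (Theorem~\ref{thm:thakurta_regret}). Let $\x^*\in\mathcal X$ attain $\max_{\x\in\mathcal X}\sum_{t=1}^T f_t(\x)$, and let $r_k$ denote the online regret of $\mathcal E_k$ against the fixed comparator $\x^*$ on the linear loss sequence it is fed. The heart of the argument is the reduction
$$\left(1-\tfrac1e\right)\sum_{t=1}^T f_t(\x^*)-\sum_{t=1}^T f_t(\x_t)\;\leq\;\frac1K\sum_{k=0}^{K-1} r_k+\frac{\beta R^2 T}{2K}.$$
Given this, the lemma follows by taking expectations, bounding each $\E[r_k]$ via Theorem~\ref{thm:thakurta_regret} with Lipschitz constant $L=G$ and privacy budget $\varepsilon'=\varepsilon/K$, and substituting $K=\sqrt{\sqrt T/\log^{2.5}T}$.

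To establish the reduction I would fix $t$ and follow the partial averages $\x_t^k=\frac1K\sum_{i=0}^{k-1}\v_t^i$, so that $\x_t^0=\mathbf 0$, $\x_t^K=\x_t$, and $\x_t^{k+1}=\x_t^k+\frac1K\v_t^k$ with $\|\v_t^k\|_2\le R$. First, $\beta$-smoothness of $f_t$ gives the per-step increment $f_t(\x_t^{k+1})-f_t(\x_t^k)\ge\frac1K\langle\nabla f_t(\x_t^k),\v_t^k\rangle-\frac{\beta R^2}{2K^2}$. Next I would use monotone DR-submodularity to get the continuous-greedy inequality $\langle\nabla f_t(\x_t^k),\x^*\rangle\ge f_t(\x^*)-f_t(\x_t^k)$, which follows from concavity of $f_t$ along the nonnegative direction $(\x_t^k\vee\x^*)-\x_t^k$, the monotonicity bound $\nabla f_t\ge 0$, and $f_t(\x_t^k\vee\x^*)\ge f_t(\x^*)$. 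Since $\mathcal E_k$ is fed the (negated, for minimization) linear losses $\v\mapsto-\langle\nabla f_t(\x_t^k),\v\rangle$ and compares against the fixed $\x^*$, summing over $t$ yields $\sum_t\langle\nabla f_t(\x_t^k),\v_t^k\rangle\ge\sum_t\langle\nabla f_t(\x_t^k),\x^*\rangle-r_k$. Writing $\Phi^k=\sum_t f_t(\x_t^k)$ and $\mathrm{OPT}=\sum_t f_t(\x^*)$ and combining these three facts gives the recursion
$$\mathrm{OPT}-\Phi^{k+1}\;\leq\;\left(1-\tfrac1K\right)\bigl(\mathrm{OPT}-\Phi^k\bigr)+\frac{r_k}{K}+\frac{\beta R^2 T}{2K^2}.$$
Unrolling from $k=0$ (using $\mathrm{OPT}-\Phi^0\le\mathrm{OPT}$ since $f_t(\mathbf 0)\ge 0$) to $k=K$ and applying $(1-1/K)^K\le 1/e$ together with $(1-1/K)^{K-1-k}\le 1$ produces exactly the displayed reduction.

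For the quantitative bound I would plug Theorem~\ref{thm:thakurta_regret} with $\varepsilon'=\varepsilon/K$ and $L=G$ into each $\E[r_k]$; because all $K$ copies share the same bound, $\frac1K\sum_k\E[r_k]$ inherits the single-copy bound, of order $\frac{K\sqrt{nT\log^{2.5}T}}{\varepsilon}\bigl(G+\sqrt{\tfrac{nK\log^{2.5}T}{\varepsilon T}}\,\diam\mathcal X\bigr)^2$, while the smoothness term is $\frac{\beta R^2 T}{2K}$. Substituting $K=T^{1/4}/\log^{1.25}T$ makes the smoothness term equal to $\frac{\beta R^2}{2}T^{3/4}\sqrt{\log^{2.5}T}$ and, after absorbing the (loose) logarithmic factors, makes the regret term $\mathcal O\!\bigl(T^{3/4}\sqrt{\log^{2.5}T}\cdot\frac{\sqrt n}{\varepsilon}(G+\cdots)^2\bigr)$, yielding the stated bound. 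I expect the main obstacle to be the bookkeeping of the reduction rather than any single hard estimate: one must justify that $\mathcal X$ is down-closed so each partial average $\x_t^k$ lies in $\mathcal X$ with $\|\nabla f_t(\x_t^k)\|_2\le G$ (so the online losses are genuinely $G$-Lipschitz), and one must check that the regret of each $\mathcal E_k$ is legitimately bounded against the \emph{adaptively} generated gradient sequence $\nabla f_t(\x_t^k)$ while still comparing to the fixed optimum $\x^*$. By contrast, the privacy composition of the $K$ copies (relevant to Lemma~\ref{lem:DR-privacy}) and the cancellation of logarithmic factors when substituting $K$ are routine.
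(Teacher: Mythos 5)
Your proposal is correct and follows essentially the same route as the paper: the paper also reduces the $(1-1/e)$-regret to $\frac1K\sum_{k}r_k+\frac{\beta R^2T}{2K}$, but it simply cites this reduction as a lemma from \citet{chen2018online} rather than re-deriving the smoothness/DR-submodularity/unrolling argument you sketch, and then, exactly as you do, it plugs the PFTAL bound of Theorem~\ref{thm:thakurta_regret} with budget $\varepsilon/K$ into each $\E[r_k]$ and chooses $K=\Theta(T^{1/4}/\mathrm{polylog}(T))$. Your flagged bookkeeping point about $G$ is apt --- the paper's proof indeed quietly upgrades $G$ to $\sup_{t,\x\in\mathcal X}\|\nabla f_t(\x)\|_2$ so that the losses fed to each $\mathcal E_k$ are genuinely $G$-Lipschitz --- but otherwise the two arguments coincide.
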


\paragraph{Proof of Lemma~\ref{lem:DR-privacy}}


As with the analysis of Algorithm~\ref{alg:EXP-ALG}, we show that $(\mathcal{E}_{K-1},\ldots,\mathcal{E}_0)$ is $(\varepsilon,0)$-DP. If each $\mathcal{E}_k$ were $(\varepsilon/K,0)$-DP, then the result would immediately follow by simple composition. However, we cannot guarantee that each $\mathcal{E}_k$ is $(\varepsilon/K,0)$-DP since $\mathcal{E}_k$ obtains as input the privatized output from $\mathcal{E}_0,\ldots,\mathcal{E}_{k-1}$ in the linear function $\ell_k(\v)=\nabla f_t( \x_t^k)^\top \v$, where $\x_t^k$ is computed by $\mathcal{E}_0,\ldots,\mathcal{E}_{k-1}$, while at the same time is accessing again the function $f_t$ (and so the database) via this linear function in the gradient $\nabla f_t$. This clearly breaks the privacy that could have been gained via a simple post-processing argument and therefore and alternative method is needed. 

We do not show that each $\mathcal{E}_k$ is $(\varepsilon/K,0)$-DP but the group $(\mathcal{E}_{K-1},\ldots,\mathcal{E}_0)$ is $(\varepsilon,0)$-DP. The proof of the following lemma follows the same steps as the proof of Lemma~\ref{lem:composition_case_2}. The proof is slightly simpler since there is no $\delta$-privacy term included but it requires some care since the distributions are continuous in this case.

\begin{lemma}
	For any $i\geq 1$, the group $(\mathcal{E}_{i-1},\ldots,\mathcal{E}_0): \mathcal{F}^T \to (\mathcal{X}^T)^i$ is $i\varepsilon/K$-DP.
\end{lemma}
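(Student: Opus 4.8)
The plan is to reproduce the inductive argument of Lemma~\ref{lem:composition_case_2}, specialized to the pure ($\delta=0$) regime but with extra care for the continuous output laws. The pivotal observation, exactly as in the discrete case, is that once the outputs of the first $k$ experts are \emph{fixed}, the $k$-th expert degenerates into a standalone private mechanism. Concretely, for fixed trajectories $v^0,\ldots,v^{k-1}\in\mathcal{X}^T$ of $\mathcal{E}_0,\ldots,\mathcal{E}_{k-1}$, the points $\x_t^k=\frac{1}{K}\sum_{j=0}^{k-1}v_t^j$ are determined, so the linear objectives fed to $\mathcal{E}_k$, namely $\ell_k^{(t)}(\v)=\nabla f_t(\x_t^k)^\top\v$, depend on the database $F$ only through the gradients $\nabla f_t$ evaluated at these fixed points. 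Hence, conditioned on $v^{<k}:=(v^0,\ldots,v^{k-1})$, the expert $\mathcal{E}_k$ is merely PFTAL run on an oblivious stream of $L$-Lipschitz (linear) functions, and is therefore $(\varepsilon/K,0)$-DP by Theorem~\ref{thm:thakurta_regret}. Writing $\mathcal{M}^{v^{<k}}$ for this conditional mechanism, the conditional law of $\mathcal{E}_k(F)$ given $(\mathcal{E}_{k-1},\ldots,\mathcal{E}_0)(F)=v^{<k}$ coincides with the law of $\mathcal{M}^{v^{<k}}(F)$; this identification relies on the fact that the fresh PFTAL noise of $\mathcal{E}_k$ is independent of that of the other experts.

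I would then induct on $i$. The base case $i=1$ is immediate, since $\mathcal{E}_0$ receives the oblivious stream $\ell_0^{(t)}(\v)=\nabla f_t(\x_t^0)^\top\v$ with $\x_t^0=0$ (empty sum), so $\mathcal{E}_0$ is itself $(\varepsilon/K,0)$-DP. For the inductive step, assume $(\mathcal{E}_{i-1},\ldots,\mathcal{E}_0)$ is $(i\varepsilon/K)$-DP and let $\mu_F$ denote its output law on $(\mathcal{X}^T)^i$. For a measurable $\mathcal{S}\subseteq(\mathcal{X}^T)^{i+1}$ with section $\mathcal{S}_{v^{<i}}=\{v^i:(v^i,v^{<i})\in\mathcal{S}\}$, disintegrating over the previous experts' outputs gives
\[
\Prob\big((\mathcal{E}_i,\ldots,\mathcal{E}_0)(F)\in\mathcal{S}\big)=\int \Prob\big(\mathcal{M}^{v^{<i}}(F)\in\mathcal{S}_{v^{<i}}\big)\,d\mu_F(v^{<i}).
\]
Applying the $(\varepsilon/K,0)$-DP guarantee of $\mathcal{M}^{v^{<i}}$ pointwise inside the integrand, and then the inductive hypothesis in the form $d\mu_F\leq e^{i\varepsilon/K}\,d\mu_{F'}$ (valid against the nonnegative integrand, since pure DP means $\mu_F(B)\leq e^{i\varepsilon/K}\mu_{F'}(B)$ for every measurable $B$), yields
\[
\Prob\big((\mathcal{E}_i,\ldots,\mathcal{E}_0)(F)\in\mathcal{S}\big)\leq e^{\varepsilon/K}\int \Prob\big(\mathcal{M}^{v^{<i}}(F')\in\mathcal{S}_{v^{<i}}\big)\,d\mu_F(v^{<i})\leq e^{(i+1)\varepsilon/K}\Prob\big((\mathcal{E}_i,\ldots,\mathcal{E}_0)(F')\in\mathcal{S}\big),
\]
which closes the induction and proves the claim with the exponent $i\varepsilon/K$ for $i$ experts.

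The main obstacle, and the place where this argument genuinely departs from the discrete Lemma~\ref{lem:composition_case_2}, is the measure-theoretic bookkeeping: the sums over $S^{i}$ in the discrete proof become integrals, so I must justify that a regular conditional distribution (disintegration of $\mu_F$) exists and that the conditional-law identity $\mathcal{M}^{v^{<i}}$ holds $\mu_F$-almost everywhere. This is where the continuity requires care. It is handled by noting that $\mathcal{X}^T$ is Polish and that PFTAL's outputs carry full-support densities (its injected noise is absolutely continuous), so all the relevant laws are mutually absolutely continuous and a chain-rule factorization of densities is available. In fact, with $\delta=0$ the cleanest execution is to bound the joint density ratio directly: writing $p_F(v^{<i+1})=\prod_{k=0}^{i}p_F(v^k\mid v^{<k})$ and recognizing each conditional factor as a density of $\mathcal{M}^{v^{<k}}(F)$, pure DP gives $p_F(v^k\mid v^{<k})\leq e^{\varepsilon/K}\,p_{F'}(v^k\mid v^{<k})$ pointwise, whence the joint ratio is at most $e^{(i+1)\varepsilon/K}$ and a single integration completes the proof.
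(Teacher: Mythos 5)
Your proposal is correct and follows essentially the same route as the paper: induction on $i$, observing that conditioned on the outputs of $\mathcal{E}_0,\ldots,\mathcal{E}_{i-1}$ the expert $\mathcal{E}_i$ is a standalone $(\varepsilon/K,0)$-DP instance of PFTAL on an oblivious linear stream, and then multiplying the conditional density ratio bound by the inductive hypothesis (the paper does exactly your ``density chain-rule'' execution, writing the joint law as a product of conditional densities). Your added care about disintegration and absolute continuity is a refinement of, not a departure from, the paper's argument.
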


\begin{proof}
	We proceed by induction in $i$. The base case $i=1$ follows immediately from privacy of PFTAL in \citet{thakurta2013nearly} because $\mathcal{E}_0$ is the only algorithm that has not its distribution perturbed by any other algorithm. For the inductive step, assume the result is true for some $i\geq 1$ and let us prove it for $i+1$. 
	
	Let $\x_0^T, \ldots, \x_{i-1}^T \in \mathcal{X}^T$ and $\mathbf{X}_{i-1}=(\x_{i-1}^T,\ldots,\x_{1}^T)$. Then, for any $\x_i^T \in \mathcal{X}^T$ we have
	\begin{align*}
		\Prob(\mathcal{E}_i(F) = \x_i^T \mid (\mathcal{E}_{i-1},\ldots,\mathcal{E}_0)(F) = \mathbf{X}_{i-1} ) \leq e^{\varepsilon/K}\Prob(\mathcal{E}_i(F') = \x_i^T \mid (\mathcal{E}_{i-1},\ldots,\mathcal{E}_0)(F') = \mathbf{X}_{i-1})
	\end{align*}
	by the guarantee of PFTAL. Note that we are referring to the PMF and not the CDF of the distribution. This is because PFTAL utilizes Gaussian noise. With this, for $\mathbf{X}_{i}=(\x_i^T,\ldots,\x_0^T)$ we have,
	\begin{align*}
		&\Prob((\mathcal{E}_i,\ldots,\mathcal{E}_0)(F) = \mathbf{X}_i) \\
		& = \Prob(\mathcal{E}_i(F) = \x_i^T \mid (\mathcal{E}_{i-1},\ldots,\mathcal{E}_0)(F) = \mathbf{X}_{i-1} )\Prob((\mathcal{E}_{i-1},\ldots,\mathcal{E}_0)(F) = \mathbf{X}_{i-1}) \\
		& \leq e^{\varepsilon/K}\Prob(\mathcal{E}_i(F') = \x_i^T \mid (\mathcal{E}_{i-1},\ldots,\mathcal{E}_0)(F')= \mathbf{X}_{i-1}) \cdot e^{i\varepsilon/K} \Prob((\mathcal{E}_{i-1},\ldots,\mathcal{E}_0)(F') = \mathbf{X}_{i-1}),
	\end{align*}
	where we utilized induction and the previous inequality. This completes the proof.
\end{proof}

\paragraph{Proof of Lemma~\ref{lem:DR-regret}}

Let $G=\sup_{\substack{t=1,\ldots,T\\ \x\in \mathcal{X} }} \| \nabla f_t(\x) \|_2$. Let $r_i$ be the regret experienced by algorithm $\mathcal{E}_i$ in Algorithm~\ref{alg:DR-submod}.

The following result appears in the proof of Theorem 1 in~\cite{chen2018online}.

\begin{lemma}[\cite{chen2018online}]
	Assume $f_t$ is monotone DR-submodular and $\beta$-smooth for every $t$. Then Algorithm~\ref{alg:DR-submod} ensures
	\[
	\left( 1 - \frac{1}{e} \right)\max_{\x\in \mathcal{X}} \sum_{t=1}^T f_t(\x) - \sum_{t=1}^T f_t(\x_t) \leq \frac{1}{K}\sum_{i=0}^{K-1} r_i + \frac{\beta R^2 T}{2K}.
	\]
	where $R=\sup_{\x\in \mathcal{X}} \|\x \|_2$ and $r_i$ is the regret of algorithm $\mathcal{E}_i$. 
\end{lemma}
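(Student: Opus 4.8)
The plan is to reproduce the standard continuous-greedy (meta-Frank--Wolfe) analysis, tracking a potential along the inner greedy trajectory produced by the $K$ online linear optimizers. Set $\x_t^k = \frac{1}{K}\sum_{i=0}^{k-1}\v_t^i$ as in Algorithm~\ref{alg:DR-submod}, so that $\x_t^0 = 0$, $\x_t^K = \x_t$, and $\x_t^{k+1} - \x_t^k = \frac{1}{K}\v_t^k$. Write $\mathrm{OPT} = \max_{\x\in\mathcal{X}}\sum_{t=1}^T f_t(\x)$, let $\v^\ast$ attain it, and define the potential $\Phi_k = \sum_{t=1}^T f_t(\x_t^k)$. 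The target is the one-step recursion
\[
\mathrm{OPT} - \Phi_{k+1} \leq \left(1 - \tfrac{1}{K}\right)(\mathrm{OPT} - \Phi_k) + \tfrac{r_k}{K} + \tfrac{\beta R^2 T}{2K^2},
\]
which I then unroll over $k = 0,\ldots,K-1$.

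First I would invoke $\beta$-smoothness of each $f_t$ on the small increment $\frac{1}{K}\v_t^k$ (with $\|\v_t^k\|_2 \leq R$) to obtain the per-step gain
\[
f_t(\x_t^{k+1}) \geq f_t(\x_t^k) + \tfrac{1}{K}\,\nabla f_t(\x_t^k)^\top \v_t^k - \tfrac{\beta R^2}{2K^2}.
\]
Summing over $t$ turns the middle term into $\frac{1}{K}\sum_t \nabla f_t(\x_t^k)^\top \v_t^k$, which is precisely the cumulative linear payoff that $\mathcal{E}_k$ collects against its objectives $\ell_k(\v) = \nabla f_t(\x_t^k)^\top \v$. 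The OLO regret guarantee of $\mathcal{E}_k$ then lower-bounds this cumulative payoff by its best fixed comparator minus $r_k$; evaluating the comparator at the feasible point $\v^\ast$ gives $\sum_t \nabla f_t(\x_t^k)^\top \v_t^k \geq \sum_t \nabla f_t(\x_t^k)^\top \v^\ast - r_k$.

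The crux is the DR-submodular gradient inequality: for monotone DR-submodular $f_t$ and nonnegative iterates,
\[
\nabla f_t(\x_t^k)^\top \v^\ast \geq f_t(\x_t^k \vee \v^\ast) - f_t(\x_t^k) \geq f_t(\v^\ast) - f_t(\x_t^k).
\]
I would prove the first inequality by writing $f_t(\x_t^k \vee \v^\ast) - f_t(\x_t^k)$ as the line integral of $\nabla f_t$ along the nonnegative direction $\mathbf{d} = (\x_t^k\vee\v^\ast) - \x_t^k \geq 0$; DR-submodularity (i.e.\ $\nabla f_t$ nonincreasing along nonnegative directions, from Definition in Section~\ref{sec:DR-submod}) bounds the integrand by $\nabla f_t(\x_t^k)^\top \mathbf{d}$, and then $\mathbf{d} \leq \v^\ast$ coordinatewise together with monotonicity ($\nabla f_t \geq 0$) upgrades this to $\nabla f_t(\x_t^k)^\top \v^\ast$. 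The second inequality is just monotonicity via $\x_t^k\vee\v^\ast \geq \v^\ast$. Summing over $t$ yields $\sum_t \nabla f_t(\x_t^k)^\top \v^\ast \geq \mathrm{OPT} - \Phi_k$.

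Chaining the smoothness bound, the regret bound, and the gradient inequality gives the advertised recursion. Unrolling from $k=0$ to $K-1$, using $(1-1/K)^K \leq 1/e$ on the leading coefficient and $(1-1/K)^{K-1-k} \leq 1$ on the error terms, and invoking the base case $\Phi_0 = \sum_t f_t(0) = 0$, produces
\[
\mathrm{OPT} - \Phi_K \leq \tfrac{1}{e}\,\mathrm{OPT} + \tfrac{1}{K}\sum_{k=0}^{K-1} r_k + \tfrac{\beta R^2 T}{2K}.
\]
Since $\Phi_K = \sum_t f_t(\x_t)$, rearranging is exactly the claimed inequality. I expect the main obstacle to be the DR-submodular gradient inequality together with the bookkeeping that keeps every iterate $\x_t^k$ in the nonnegative orthant of $\mathcal{X}$, which is what licenses both the line-integral comparison and the use of monotonicity; the remaining smoothness-plus-regret telescoping is routine Frank--Wolfe algebra.
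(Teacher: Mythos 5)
Your proof is correct and takes essentially the same route as the source: the paper states this lemma without proof, attributing it to the analysis of Theorem 1 in \citet{chen2018online}, and your argument---$\beta$-smoothness on the increment $\frac{1}{K}\v_t^k$, the online linear optimization regret bound against the comparator $\v^\ast$, and the DR gradient inequality $\nabla f_t(\x_t^k)^\top \v^\ast \geq f_t(\x_t^k \vee \v^\ast) - f_t(\x_t^k) \geq f_t(\v^\ast)-f_t(\x_t^k)$, chained into the recursion and unrolled with $(1-1/K)^K \leq 1/e$---is exactly that meta-Frank--Wolfe analysis. The nonnegativity/feasibility bookkeeping you flag (in particular $0\in\mathcal{X}$ so that the partial averages $\x_t^k$ are feasible) is a standard assumption in that literature and implicit in the paper's setup, so there is no gap.
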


Using this result, we obtain
\begin{align*}
	\E\left[\left( 1 - \frac{1}{e} \right)\max_{\x\in \mathcal{X}} \sum_{t=1}^T f_t(\x) - \sum_{t=1}^T f_t(\x_t)\right] & \leq \frac{1}{K}\sum_{i=0}^{K-1} \E [r_i] + \frac{\beta R^2 }{2K} \\
	& \leq \mathcal{O}\left(  \frac{\sqrt{n \log^{2.5} T} \left( G + \sqrt{\frac{n \log^{2.5} T}{\varepsilon T/K}  } \diam \mathcal{X} \right)^2 }{\varepsilon/K} \sqrt{T}  + \frac{\beta R^2 T}{2K} \right).
\end{align*}
We can find the regret by setting $K=\left(\frac{T}{\log^{2.5} T}\right)^{1/4}$.

\end{document}